\newcommand{\diff}[1]{{#1}}
\newcommand{\Af}{\mathds{A}}
\theoremstyle{plain}
\newtheorem{theorem}{Theorem}
\newtheorem{proposition}{Proposition}
\newtheorem{lemma}{Lemma}
\theoremstyle{definition}
\newtheorem{definition}{Definition}
\newtheorem{assumption}{Assumption}
\theoremstyle{remark}
\newtheorem{remark}{Remark}
\newtheorem{example}{Example}
\newcommand{\mb}[1]{\mathbb{#1}}
\title{Decoupling regularization from the action space}
\author{%
    Sobhan Mohammadpour \\
    Mila - Quebec AI institute \\
    University of Montreal \\
    \texttt{somo@mit.edu} 
    \And
    Emma Frejinger \\
    Department of Computer Science and Operations Research \\
    University of Montreal \\
    \texttt{emma.frejinger@umontreal.ca} 
    \And
    Pierre-Luc Bacon \\
    Mila - Quebec AI institute \\
    Department of Computer Science and Operations Research \\
    University of Montreal \\
    \texttt{pierre-luc.bacon@mila.quebec} 
}
\begin{document}
\maketitle

\begin{abstract}
Regularized reinforcement learning (RL), particularly the entropy-regularized kind, has gained traction in optimal control and inverse RL. While standard unregularized RL methods remain unaffected by changes in the number of actions, we show that it can severely impact their regularized counterparts. This paper demonstrates the importance of decoupling the regularizer from the action space: that is, to maintain a consistent level of regularization regardless of how many actions are involved to avoid over-regularization. Whereas the problem can be avoided by introducing a task-specific temperature parameter, it is often undesirable and cannot solve the problem when action spaces are state-dependent. In the state-dependent action context, different states with varying action spaces are regularized inconsistently. We introduce two solutions: a static temperature selection approach and a dynamic counterpart, universally applicable where this problem arises. Implementing these changes improves performance on the DeepMind control suite in static and dynamic temperature regimes and a biological sequence design task.
\end{abstract}

\section{Introduction}\label{sec:intro}

Regularized reinforcement learning (RL) \citep{geist2019theory} has gained prominence as a widely-used framework for inverse RL \citep{rust1987optimal,ziebart2008maximum,fosgerau2013link} and control \citep{todorov2006linearly,peters2010relative,rawlik2012stochastic,van2015learning,fox2015taming,nachum2017bridging,haarnoja2017reinforcement,haarnoja2018soft,garg2023extreme}. The added regularization can help with robustness \citep{derman2021twice}, having a policy that has full support \citep{rust1987optimal}, and inducing a specific behavior \citep{todorov2006linearly}. However, we show that these methods are not robust to changes in the action space. We argue that changing the action space should not change the optimal \diff{regularized} policy \diff{under the same change}.  For instance, changing the robot's acceleration unit from meters per second squared to feet per minute squared should not lead to a different optimal policy. While \cite{haarnoja2018soft}'s heuristic is a step in the right direction, we argue that the heuristic does not reflect the structure of the action space, just the number of actions, and does not generalize to other regularizers MDPs.

The key idea proposed here is to control the range of the regularizer by changing the temperature. By not changing the temperature, we demonstrate that we inadvertently regularize states with different action spaces differently. We show that for regularizers that we call standard, which include entropy, states with more actions are always regularized more than states with fewer actions. We introduce decoupled regularizers, a class of regularizers that fit \citet{geist2019theory}'s formalism and have constant range. We show that we can convert any non-decoupled regularizer into a decoupled one.

Our contribution is as follows. First, we propose a static temperature selection scheme that works for a broad class of regularized Markov Decision Processes (MDPs), including entropy. Secondly, we introduce an easy-to-implement dynamic temperature heuristic applicable to all regularized MDPs. Finally, we show that our approach improves the performance on benchmarks such as the DeepMind control suite \citep{tassa2018deepmind} and the drug design MDP of \cite{bengio2021flow}.

\section{Preliminaries}\label{prelem}

A discounted MDP is a tuple $(\mathcal{S}, \mathcal{A}, \Af, R, P, \gamma)$ where $\mathcal{S}$ represents the set of states, $\mathcal{A}$ is the collection of all possible actions and $\Af(s)$ represents the set of valid actions at state $s$. If $|\Af(s)|$ is not constant for all $s\in \mathcal{S}$, we say that the MDP has state-dependent actions. The reward function, denoted by $R: \mathcal{S} \times \mathcal{A} \to \mathbb{R}$ maps state-action pairs to real numbers. The transition function, $P: \mathcal{S} \times \mathcal{A} \to \Delta(\mathcal{S})$, determines the probability of transitioning to the next state, where $\Delta(\mathcal{S})$ indicates the probability simplex over the set of states $\mathcal{S}$. Additionally, the discount factor, represented by $\gamma \in (0, 1]$, is included in our problem formulation.

When solving a Markov Decision problem under the infinite horizon discounted setting, the aim is to find a policy $\pi(s): \mathcal{S} \to \Delta(\Af(s))$ that maximizes the expected discounted return $V_\pi(s) \triangleq \mathbb{E}\left[\sum_{t=0}^\infty \gamma^t R(s_t, A_t) \middle| s_0 =s \right]$ for all states. A fundamental result in dynamic programming states that the value function $V_{\pi^\star}$ for any stationary optimal policy $\pi^\star$ must satisfy the Bellman equations \citep{bellman1954theory}:
\[
V(s) = \max_{\pi_\diff{s}\in\Delta(\diff{\Af(s)})} \mb{E}_{a\sim\pi_\diff{s}}[Q(s,a)] \quad \forall s\in\mathcal{S},
\]
where
$Q(s,a)$ is defined as $R(s,a) + \gamma \mb{E}_{s'\sim P(s,a)}[V(s')]$.
Regularized MDPs \citep{geist2019theory} introduce a strictly convex regularizer $\Omega$ with temperature $\tau$ to regularize the policy as 
\[
V(s) = \max_{\pi_\diff{s}\in\Delta(\diff{\Af(s)})} \mb{E}_{a\sim\pi_\diff{s}}[Q(s,a)] - \tau\Omega(\pi_\diff{s}) = \Omega^\star_\tau(Q(s,\cdot)),
\]
such that $V_\pi(s) \triangleq \mathbb{E}\left[\sum_{t=0}^\infty \gamma^t (R(S_t, A_t) \diff{-} \tau\Omega(\pi(\cdot|s))) \middle| S_0 =s \right]$. The optimal policy equals the gradient of $\Omega^\star_\tau$ \citep{geist2019theory}. Replacing $\Omega$ by the negative entropy yields soft Q-learning (SQL) as $\Omega^\star_\tau$ is the log-sum-exp function at temperature $\tau$ ($\tau\log\sum_a\exp(Q(s,a)/\tau)$) and $\nabla \Omega^\star_\tau$ is the softmax at temperature $\tau$, $\pi(a|s) \propto \exp(Q(s,a)/\tau)$.

Whereas our proposed approach applies to all regularized MDPs, we focus on the case where $-\Omega$ is the entropy. There are two main reasons for this choice. First, it is widely used \citep[e.g.][]{ziebart2008maximum,haarnoja2017reinforcement}. Second, it allows us to derive analytical bounds. Other alternatives include Tsallis entropy \citep{lee2019tsallis}. 

\section{\diff{Gravitation} towards regularization}\label{pain}

To quantify the impact of a change in action space on regularization, we first define the range of a regularizer.

\begin{definition}\label{def:range}
     The range of the regularizer $\Omega$ over the action space A, $L(\Omega,A)$ is $\sup_{\pi\in\Delta(A)}\Omega(\pi) - \min_{\pi\in\Delta(A)}\Omega(\pi)$.
\end{definition}

The range is sometimes used for analyzing regularized follow-the-leader algorithms \citep[e.g., Theorem 5.2][]{hazan2016introduction}, and its square is referred to as the \textit{diameter} \citep{hazan2016introduction}. If the range depends on the action space of the state, the propagation of the regularization by the Bellman equation can have a compounding effect. Thus, the change in regularization affects not only the state itself but all states that can reach it. Thus, balancing regularization and reward maximization in MDPs in sequential decision-making processes is crucial. We show this using two small illustrative examples.

\begin{figure}[h]
\begin{subfigure}[t]{0.4\textwidth}
\centering
\begin{tikzpicture}[state/.style={circle, draw, node distance=2.6cm, auto, on grid},  auto,  actionpath/.style={decoration={markings, mark=at position 0.5 with {\node[above] {#1};}}, postaction={decorate}}
]
  \node[state] (A) {$s_1$};
  \node[state, below right=of A] (B) {$s_2$};
  \node[state, above right=of B] (C) {$s_3$};
  \path[->] 
  (A) edge [bend left] node {$a_0$} (C)
  (A) edge [bend right] node {} (B)
  (B) edge [bend left=60] node {$a_1$} (C)
  (B) edge [bend left=20] node {$a_2$} (C)
  (B) edge [bend right=25,draw=none] node {$\ddots$} (C)
  (B) edge [bend right=50] node {$a_n$} (C)
  ;
\end{tikzpicture}

\caption[One illustrative MDPs.]{An MDP with $n+1$ paths.\label{fig:1n}}
\end{subfigure}
\hfill
\begin{subfigure}[t]{0.4\textwidth}
\centering
\begin{tikzpicture}[  state/.style={circle, draw, node distance=1.5cm, auto, on grid},  auto,  actionpath/.style={decoration={markings, mark=at position 0.5 with {\node[above] {#1};}}, postaction={decorate}}
]
    \node[state] (A) {$s_1$};
    \node[state, right of=A] (St) {$s_2$};
    
\path[->]
(A) edge node[above] {$a_0$} (St)
(A) edge [loop above, in=50, out=130, looseness=12] node[above] {$a_1$} (A)
(A) edge [loop below, in=110, out=190, looseness=12] node[left] {$a_2$} (A)
(A) edge [loop below, in=170, out=250, looseness=5, draw=none] node[left] {$\ddots$} (A)
(A) edge [loop below, in=230, out=310, looseness=12] node[below] {$a_n$} (A)
;

\end{tikzpicture}

\caption{An MDP with $n$ loops. \label{fig:loopy}}

\end{subfigure}
\caption{Toy MDPs}
\end{figure}

\begin{example}\label{lem:onestep}
(Bias due to $|\Af(s)|$) In the MDP shown in Figure~\ref{fig:1n}, with reward $r$ on all transitions starting at $s_1$ and zero otherwise, the probability of taking the action $a_0$ is $\frac{1}{n+1}$ (where $n+1$ is the number of paths) with no discounting.
\end{example}
\begin{proof}
The result follows from the definition of V. The value of $s_2$ is $\tau\log n$, thus the probability of taking the action $a_0$ is $\frac{\exp{r/\tau}}{{\exp{r/\tau} + \exp{(r+\tau\log n)/\tau}}}$ at temperature $\tau$.
\end{proof}

\begin{example}\label{lem:loopy}
(Bias due to loops) In the MDP shown in Figure~\ref{fig:loopy}, with reward $r$ on all transitions, the probability of taking the action $a_0$ is $1 - n\exp(r/\tau)$, at temperature $\tau$ and no discounting. 
\end{example}
\begin{proof}
    The value of $s_1$, $V$, equals  $\tau \log \left[ n \exp(r/\tau + V/\tau) + \exp(r/\tau\diff{)} \right]$ or $r - \tau\log(1 - n \exp(r/\tau))$. Thus, the probability of taking $a_0$ is $\exp(r/\tau)/\exp(V/\tau)=1 - n\exp(r/\tau)$. The value of the MDP diverges if $1 \leq n\exp(r/\tau)$
\end{proof}

These examples show a gravitation towards regularization. Concretely, with negative entropy, the regularization at the states with larger action spaces is greater, resulting in a higher regularization and higher reward to pass through those states. Thus, when we increase $n$,  the probability of passing through the state with $n$ or $n+1$ actions increases. However, the states should be regularized consistently, and how much we regularize a state should not depend on its action space. One way of measuring this quantity is the range defined in Definition~\ref{def:range}. Indeed, we argue that the range should not depend on the action space. This motivates our solution, which we call decoupled regularizers.

Despite the specificity of these examples, the same behavior can be observed more broadly with other regularizers, including those in stochastic MDPs \citep{mai2020relation}. To this end, we introduce a general class of regularized MDPs that show a similar problem in Section~\ref{section:someres}. It is also important to note that the discount factor was set to one for mathematical clarity, and including discounting alleviates the risk of divergence but does not completely eliminate the problematic behavior. In the following, we introduce our approach to address inconsistent regularization across action spaces.

\section{Decoupled regularizers}

\diff{We note that in the following we can replace the sum with an integral in the continuous actions space. We look at differential entropy and continuous actions in Section~\ref{sec:auto}.}
 
\begin{definition}
    We call a regularizer $\Omega$ decoupled if the range of $\Omega$ is constant over all action spaces $\Af(s)$ for all valid states $s$. For any non-decoupled regularizer $\hat{\Omega}$ at state $s$, $\Omega(\pi)$, defined as $\hat{\Omega}(\pi)/L(\hat{\Omega},\Af(s))$ is the decoupled version of $\hat{\Omega}$.
\end{definition}

Concretely, the value of a regularized MDP at state $s$ is given by $V(s) = \Omega^\star_\tau(Q(s,\cdot)),$ which we propose to replace with
$V(s) = \Omega^\star_{\tau/{\boldsymbol|L(\Omega,\Af(s))|}}(Q(s,\cdot)).$

We give the range of some commonly used regularizers \diff{on discrete actions}  in Table~\ref{tab:omega}. Note that in the Tsallis case, $q$ is often set to 2. While there are no known analytical solutions for the convex conjugate of Tsallis entropy, when $q=2$, it can be solved efficiently \citep{michelot1986finite,hazan2016introduction,duchi2008efficient}. We further note that the convex conjugate of KL with the uniform distribution (denoted $U$) is sometimes called mellowmax \citep{asadi2017alternative}. The relationship between mellowmax and KL divergence was first shown in \cite{geist2019theory}.
\renewcommand{\arraystretch}{2}
\begin{table}
    \centering\resizebox{\linewidth}{!}{
    \begin{tabular}{lccc}
    \toprule
     & $H(\pi)$ & $\text{KL}(\pi\|U)$ & negative Tsallis entropy \\\midrule
    $\Omega(\pi)$ & $\sum_{a\in\Af(s)} \pi(a|s)\log\pi(a|s)$ & $\sum_{a\in\Af(s)} \pi(a|s) \log \left(\pi(a|s) /\frac{1}{|\Af(s)|}\right)$ &$\frac{k}{q-1}\left(\sum_{a\in\Af(s)} \pi(a|s)^q - 1\right)$ \\
    $\Omega^\star(Q(s, \cdot))$ & $ \tau\log\sum_{a\in\Af(s)}\exp(Q(s,a)/\tau) $ &$\tau\log\left(\frac{1}{|\Af(s)|}\sum_{a\in\Af(s)}\exp(Q(s,a)/\tau)\right)$ &  \\
    $\nabla \Omega^\star_\tau((s,\cdot))$ & $\dfrac{\exp(Q(s,a)/\tau)}{\sum_{a\in\Af(s)}\exp(Q(s,a)/\tau)}$ &  $\dfrac{\exp(Q(s,a)/\tau)}{\sum_{a\in\Af(s)}\exp(Q(s,a)/\tau)}$ \\ 
    $\sup_{\pi\in\Delta(\Af(s))} \Omega(\pi)$ &  0 & $\log|\Af(s)|$ & $0$ \\
    $\min_{\pi\in\Delta(\Af(s))} \Omega(\pi)$ & $-\log |\Af(s)|$ & 0 & $\frac{k}{q-1}\left(\frac{1}{|\Af(s)|^q} - 1\right)$\\
    $L(\Omega,\Af(s))$ & $\log|\Af(s)|$ & $\log|\Af(s)|$ & $\frac{k}{q-1}\left(1- \frac{1}{|\Af(s)|^q}\right)$\\
    \bottomrule
    \end{tabular}}
    \caption{Different values at state $s$. Empty cells indicate no known analytical solution.}
    \label{tab:omega}
\end{table}
\renewcommand{\arraystretch}{1}
The range for the negative entropy regularizer is $\log |\Af(s)|$, which equals the logarithm of the number of actions. Thus the effective temperature is $\tau/\log |\Af(s)|$ as the minimum discrete entropy is zero. Entropy divided by maximum entropy is called efficiency \citep{alencar2014information}. 

\section{Standard regularizers and the drift in range}\label{section:someres}

We now look at a general class of regularizers \diff{over discrete action spaces}.
\begin{definition}
    We call a regularizer in the form $\Omega(\pi_\diff{s}) = g(\sum_a f(\pi_\diff{s}\diff{(a)}))$ for a strictly convex function $f$ and a strictly monotonically increasing function $g$ a standard regularizer. We assume that $\Omega(\pi)$ is strictly convex to be compatible with regularized MDPs.
\end{definition}

We chose this form because it is easy to reason about yet general enough to encapsulate many regularizers, including entropy and Tsallis entropy. The regularizer is invariant to permutation and naturally extends itself to higher dimensions.

In addition, we include one regularity assumption.

\begin{assumption}\label{ass:2}
    (Symmetry) we assume that $f(0)$ and $f(1)$ are equal to $0$.
\end{assumption}

We now show that under Assumption~\ref{ass:2} the supremum is constant.

\begin{lemma}\label{lem:sup}
    Under Assumption~\ref{ass:2}, the supremum of the regularizer is equal to the limit of the regularizer at a deterministic distribution (i.e., only one action has non-zero probability, and the others have zero probability).
\end{lemma}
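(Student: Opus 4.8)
The plan is to reduce the statement to a claim about the inner sum and then exploit convexity. Since $g$ is strictly monotonically increasing, the maximizer of $\Omega(\pi_s) = g\!\left(\sum_a f(\pi_s(a))\right)$ over the simplex $\Delta(\Af(s))$ is the same as the maximizer of the inner sum $h(\pi_s) \triangleq \sum_{a} f(\pi_s(a))$, so the whole problem is to locate where $h$ is maximized and to evaluate it there. First I would observe that $h$ is \emph{convex} on $\Delta(\Af(s))$: each term $f(\pi_s(a))$ is convex as a function of the single coordinate $\pi_s(a)$ (because $f$ is strictly convex, hence convex), and a finite sum of convex functions is convex.

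Next I would write an arbitrary $\pi_s \in \Delta(\Af(s))$ as a convex combination of the extreme points of the simplex, namely the deterministic distributions $\{e_a\}_{a \in \Af(s)}$, with weights given by its own coordinates, $\pi_s = \sum_a \pi_s(a)\, e_a$. Jensen's inequality applied to the convex function $h$ then gives $h(\pi_s) \le \sum_a \pi_s(a)\, h(e_a)$. I would then evaluate $h$ at a vertex: at $e_a$ one coordinate equals $1$ and all others equal $0$, so by Assumption~\ref{ass:2} we get $h(e_a) = f(1) + (|\Af(s)|-1)\, f(0) = 0$ for every $a$. Substituting into the Jensen bound yields $h(\pi_s) \le \sum_a \pi_s(a)\cdot 0 = 0$ for all $\pi_s$, with equality exactly at the vertices. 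Hence $h(\pi_s) \le 0 = h(e_a)$, and since $g$ is increasing, $\Omega(\pi_s) = g(h(\pi_s)) \le g(0) = \Omega(e_a)$, so the supremum of $\Omega$ equals its value $g(0)$ at any deterministic distribution. Notably this value is independent of the state and of $|\Af(s)|$, which is the whole point of the lemma.

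The one place that needs care, and the main obstacle, is the boundary: a deterministic distribution lies on the boundary of $\Delta(\Af(s))$, where $f$ is only pinned down by Assumption~\ref{ass:2} (for instance with negative entropy $f(p)=p\log p$, the value $f(0)=0$ is the one-sided limit $\lim_{p\to 0^+} p\log p$). I would therefore interpret $h(e_a)$, and consequently $\Omega(e_a)$, as the limit of $h$ (resp.\ $\Omega$) as $\pi_s \to e_a$ from the interior, which is precisely what the statement means by ``the limit of the regularizer at a deterministic distribution.'' It is worth emphasizing that only convexity of $f$ and only monotonicity of $g$ are needed for the upper bound $\Omega \le g(0)$, so no continuity of $g$ is required; the role of the limiting endpoint values supplied by Assumption~\ref{ass:2} is merely to guarantee that this bound is tight and is attained (in the limit) along the vertices of the simplex.
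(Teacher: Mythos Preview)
Your proof is correct and follows essentially the same idea as the paper's: both show $\sum_a f(\pi_s(a)) \le 0$ via convexity of $f$ together with $f(0)=f(1)=0$, then apply monotonicity of $g$. The paper's version is marginally more direct---it bounds each term $f(\pi_s(a)) < \pi_s(a)\, f(1) + (1-\pi_s(a))\, f(0) = 0$ coordinate-wise rather than invoking Jensen on the full sum $h$---but the content is the same, and your treatment of the boundary/limit issue is more careful than the paper's.
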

\begin{proof}
    The \diff{supremum} of $f$ is at $0$ and $1$. By convexity, \diff{at any other point between $0$ and $1$, $f$ is smaller than 0.
    By strict convexity $\forall_{0<p<1} f(p) < pf(1) + (1-p)f(0)=0$}
\end{proof}

Assumption~\ref{ass:2} helps us identify the supremum of the regularizer and show that it is constant. However, it is possible that Lemma~\ref{lem:sup} holds even if Assumption~\ref{ass:2} does not. For instance, the maximum negative Tsallis entropy is always zero.

With the supremum being $g(0)$, we now calculate the minimum regularizer.
\begin{lemma}
    The minimum regularization is achieved under the uniform policy.
\end{lemma}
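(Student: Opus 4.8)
The plan is to exploit the composite structure $\Omega(\pi_s) = g(\sum_a f(\pi_s(a)))$ together with the strict convexity of $f$ and the strict monotonicity of $g$. Since $g$ is strictly monotonically increasing, minimizing $\Omega$ over the simplex $\Delta(\Af(s))$ is equivalent to minimizing the inner sum $\sum_{a\in\Af(s)} f(\pi_s(a))$: because $g(x)\le g(y)\iff x\le y$, the composite $g\circ h$ and the inner map $h$ share exactly the same set of minimizers. So first I would reduce the claim to showing that the uniform policy uniquely minimizes $\sum_{a} f(\pi_s(a))$ subject to $\pi_s\in\Delta(\Af(s))$.

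For that reduced problem I would invoke Jensen's inequality for the strictly convex $f$. Writing $N = |\Af(s)|$ and using $\sum_a \pi_s(a)=1$, convexity gives
\[
\frac{1}{N}\sum_{a\in\Af(s)} f(\pi_s(a)) \ge f\!\left(\frac{1}{N}\sum_{a\in\Af(s)} \pi_s(a)\right) = f\!\left(\frac{1}{N}\right),
\]
so that $\sum_{a} f(\pi_s(a)) \ge N f(1/N)$, a bound attained by the uniform policy $U$. Strict convexity of $f$ upgrades Jensen to equality holding \emph{only} when all arguments coincide, i.e. $\pi_s(a)=1/N$ for every $a$; hence $U$ is the unique minimizer of the inner sum.

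Finally I would push this through $g$: applying the strictly increasing $g$ to both sides preserves the inequality and its strictness, yielding $\Omega(\pi_s) = g(\sum_a f(\pi_s(a))) \ge g(N f(1/N)) = \Omega(U)$ with equality iff $\pi_s = U$. This establishes that the minimum regularization is achieved, uniquely, at the uniform policy.

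The argument is largely routine; the only points that require care are (i) justifying that the monotone reparametrization by $g$ genuinely preserves the set of minimizers, which is exactly where strict monotonicity is used, and (ii) pinning down the equality case of Jensen as precisely the uniform distribution, which is where the strict convexity already assumed for $f$ is essential. No boundary complications arise since the candidate minimizer lies in the interior of the simplex.
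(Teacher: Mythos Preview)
Your proof is correct, but it follows a different route from the paper. The paper argues by symmetrization: it assumes a nonuniform minimizer, observes that the uniform policy lies in the convex hull of all its permutations, and then invokes the permutation invariance and strict convexity of $\Omega$ itself to conclude that the uniform policy has strictly smaller value, a contradiction. You instead exploit the composite structure directly: the strict monotonicity of $g$ reduces the problem to minimizing the inner sum $\sum_a f(\pi_s(a))$, and then Jensen on the strictly convex $f$ pins the minimizer at the uniform point. The paper's argument is slightly more general in spirit, since it works for any permutation-invariant strictly convex $\Omega$ without reference to the $g\circ(\sum f)$ decomposition; your argument is more elementary and self-contained given the standard form, and in fact does not need the separately stated assumption that $\Omega$ is strictly convex, only that $f$ is.
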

\begin{proof}
    We proceed with a proof by contradiction. Suppose that the minimum is a nonuniform policy. Since the uniform policy is in the convex hull of all permutations of that policy, by strict convexity and symmetry, it has a lower value than the supposed minimum and thus is contradictory.
\end{proof}

We now show that the minimum of the regularizer, attained at the uniform policy, is decreasing in the number of actions.

\begin{lemma}\label{lem:grow}
    The minimum regularization decreases with the number of actions.
\end{lemma}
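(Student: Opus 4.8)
The plan is to first make the minimum explicit and then reduce the claim to a convexity property of $f$. By the preceding lemma, the minimizing policy over an action space of size $n := |\Af(s)|$ is the uniform policy $\pi \equiv 1/n$, so the minimum regularization is $\Omega_{\min}(n) = g\!\left(n\, f(1/n)\right)$. Since $g$ is strictly monotonically increasing, it suffices to show that the inner quantity $h(n) := n\, f(1/n)$ is strictly decreasing in $n$; the monotone map $g$ then preserves this ordering and yields the claim.

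The core of the argument is to recognize $h(n)$ as a secant slope of $f$ anchored at the origin. Writing $t = 1/n \in (0,1]$, we have $h(n) = f(t)/t = \bigl(f(t) - f(0)\bigr)/(t - 0)$, using $f(0) = 0$ from Assumption~\ref{ass:2}. I would then show that $t \mapsto f(t)/t$ is strictly increasing on $(0,1]$: for any $0 < t_1 < t_2 \le 1$, set $\lambda = t_1/t_2 \in (0,1)$ so that $t_1 = \lambda t_2 + (1-\lambda)\cdot 0$, and apply strict convexity together with $f(0)=0$ to obtain $f(t_1) < \lambda f(t_2) + (1-\lambda) f(0) = (t_1/t_2)\, f(t_2)$, i.e., $f(t_1)/t_1 < f(t_2)/t_2$.

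Finally, since $t = 1/n$ is strictly decreasing in $n$ while $f(t)/t$ is strictly increasing in $t$, their composition $h(n) = f(1/n)/(1/n)$ is strictly decreasing in $n$, and applying the strictly increasing $g$ shows $\Omega_{\min}(n) = g(h(n))$ is strictly decreasing in $n$, which is the statement. I expect the middle step — monotonicity of the secant slope — to be the main obstacle, though it collapses to a single application of strict convexity once the interior point $t_1$ is written as a convex combination of $t_2$ and the endpoint $0$ supplied by Assumption~\ref{ass:2}. As a sanity check, for negative entropy $f(p)=p\log p$ one gets $h(n) = -\log n$, recovering the $-\log|\Af(s)|$ entry of Table~\ref{tab:omega}.
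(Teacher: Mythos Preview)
Your proof is correct and, like the paper, reduces the claim to the monotonicity of $t\mapsto f(t)/t$ on $(0,1]$; you also make explicit the role of $g$, which the paper's proof leaves implicit. The difference is in how that monotonicity is established: the paper differentiates, using the tangent inequality $f(0) > f(x) - x f'(x)$ from strict convexity together with $f(0)=0$ to conclude that $(f(x)/x)' = (xf'(x)-f(x))/x^2 > 0$. Your argument instead reads $f(t)/t$ as a secant slope anchored at $0$ and applies the chord inequality directly, so it needs only strict convexity and Assumption~\ref{ass:2} and avoids assuming that $f$ is differentiable. This buys a bit of generality and is arguably cleaner; the paper's derivative route is shorter but silently imports a smoothness hypothesis not stated in the definition of a standard regularizer.
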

\begin{proof}
    By strict convexity, we have that $f(0) > f(x) + f'(x)(0-x)$. By Assumption~\ref{ass:2}, we have that $xf'(x) - f(x)$ and as a consequence $f'(x)/x-f(x)/x^2$, the gradient of $f(x)/x$, is always positive. This implies that minimum regularization decreases as $x=1/n$ decreases.
\end{proof}

Equipped with these three lemmas, we can now show that the range grows with the number of actions.
\begin{theorem}
    The range of standard regularizers grows with the number of actions.
\end{theorem}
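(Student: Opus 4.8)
The plan is to write the range directly as the difference of its two defining extrema and then feed in the three preceding lemmas, one controlling each term. First I would expand $L(\Omega,\Af(s)) = \sup_{\pi\in\Delta(\Af(s))}\Omega(\pi) - \min_{\pi\in\Delta(\Af(s))}\Omega(\pi)$ as in Definition~\ref{def:range} and handle the two terms separately, writing $n = |\Af(s)|$ for the number of actions.

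For the supremum term, I would invoke Lemma~\ref{lem:sup}: under Assumption~\ref{ass:2} the supremum is the limiting value at a deterministic distribution, where $\sum_a f(\pi(a)) = f(1) + (n-1)f(0) = 0$, so that $\sup_{\pi}\Omega(\pi) = g(0)$. The key observation is that this value does not depend on $n$, so the supremum contributes a constant to the range regardless of how many actions are available.

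For the minimum term, I would chain the lemma that the minimizer is the uniform policy with Lemma~\ref{lem:grow}. Because $g$ is strictly monotonically increasing, minimizing $\Omega(\pi) = g(\sum_a f(\pi(a)))$ is equivalent to minimizing $\sum_a f(\pi(a))$, which the uniform policy achieves; evaluating there gives $n\,f(1/n) = f(x)/x$ with $x = 1/n$. Lemma~\ref{lem:grow} showed that $f(x)/x$ is increasing in $x$, hence strictly decreasing as $x = 1/n \to 0$, i.e.\ as $n$ grows. Transporting this through the strictly increasing $g$ preserves the direction, so $\min_{\pi}\Omega(\pi)$ strictly decreases as $n$ increases.

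Combining the two terms, the range equals a constant ($g(0)$) minus a strictly decreasing function of $n$, and is therefore strictly increasing in the number of actions, which is exactly the claim. The only real obstacle is bookkeeping rather than a conceptual difficulty: I must use the monotonicity of $g$ correctly to carry the extremal values of the inner sum $\sum_a f(\pi(a))$ through to $\Omega$ itself, and check that the strict inequalities supplied by Lemmas~\ref{lem:sup} and~\ref{lem:grow} survive, so that ``grows'' is genuinely strict and not merely nondecreasing.
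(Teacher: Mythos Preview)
Your proposal is correct and follows essentially the same route as the paper: combine Lemma~\ref{lem:sup} (the supremum is the constant $g(0)$) with Lemma~\ref{lem:grow} (the minimum decreases in $n$) to conclude that the range grows. Your version is simply more explicit about passing the extrema of $\sum_a f(\pi(a))$ through the monotone $g$, which is helpful bookkeeping but not a different argument.
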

\begin{proof}
    While the minimum grows as per Lemma~\ref{lem:grow}, the supermum stays the same per Lemma~\ref{lem:sup}, and the range hence grows with the number of actions.
\end{proof}

\begin{remark}
This result holds for any regularizer invariant to permutation, and its supremum is constant. The standard form guarantees permutation invariance. For instance, the range of negative Tsallis entropy also grows with the number of actions.
\end{remark}
\begin{remark}
    We have made no comment on the rate at which the range grows. For instance, the range of Tsallis entropy grows to 1, and thus, the range of negative Tsallis entropy does not grow as fast as negative entropy with respect to the number of action spaces.
\end{remark}

\section{Visiting decoupled maximum entropy RL}

In this section, we review decoupled maximum entropy RL, revisit the examples provided in Section~\ref{pain}, and show that decoupling improves the convergence of undiscounted entropy regularized MDPs.

\algnewcommand{\IfThenElse}[3]{
  \State \algorithmicif\ #1\ \algorithmicthen\ #2\ \algorithmicelse\ #3}
  
\begin{minipage}{\textwidth}
\begin{algorithm}[H]
\caption{Decoupled SQL}\label{alg:two}
\begin{algorithmic}
\State $\text{\diff{Sample $s$ from} $\mathbb{P}^0$}$
\IfThenElse{decoupled}{$\tau' \gets \tau /\log |\Af(s)|$}{ $\tau' \gets \tau$}\;
\While{ true}
    \State $\text{Sample action $a\in\Af(s)$ with probability $\exp(Q(s,a)/{\tau'}) / \sum_{a'\in\Af(s)} \exp(Q(s,a')/{\tau'})$}$
    \State {Play action $a$ and observe $s',r$}
    \IfThenElse{decoupled}{$\tau' \gets \tau /\log |\Af(s)|$}{ $\tau' \gets \tau$}\;
    \State $Q(s,a) \gets r + {\tau'}\log \sum_{a'\in\Af(s')} \exp(Q(s',a') / {\tau'})$
    \State $s\gets s'$
\EndWhile
\end{algorithmic}
\end{algorithm}
\end{minipage}

First, we provide an example of a tabular implementation in Algorithm~\ref{alg:two}. The conditional shows the changes needed to decouple SQL.

Next, we revisit the MDP in Figure~\ref{fig:1n}. Using decoupled SQL, we get that the probability of action $a_0$ constant as the value of $s_2$ is $\tau/\log n \log(n \exp(0\log n/\tau) = \tau$ when regularizing by decoupled entropy. The value of $s_1$ is $\tau/\log 2 \log\left[\exp(r\log 2/\tau) + \exp(r \log 2/\tau + \tau\log 2/\tau)\right]$ or $r + \tau/\log 2\log 3$. The probability of taking action $a_0$ is equal to $\exp(r\log 2/\tau - V(s_1)\log 2/\tau)$ or $1 / 3$.

The state $s_1$ of the MDP in Figure~\ref{fig:loopy} will be at temperature $\tau/\log (n + 1)$; thus, if $r$ is less than $-1$, it will not diverge. The probability of taking the action $a_0$ is $1 - n \exp(r\log n/\tau)$, which is strictly decreasing in $r$ and has a root at $r$ equal to $-1$; thus, the regularized Bellman equation converges below that threshold. The improved convergence of the MDP in Figure~\ref{fig:loopy} using decoupled entropy motivates the following more general result.

\begin{proposition}\label{prop1}
In maximum entropy undiscounted inverse reinforcement learning with deterministic dynamics like \cite{ziebart2008maximum} or \cite{fosgerau2013link}, decoupled entropy guarantees convergence if the maximum reward is less than $-\tau$. 
\end{proposition}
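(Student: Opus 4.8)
The plan is to show that the decoupled soft Bellman backup has a \emph{finite} fixed point by sandwiching it between two ordinary (max-operator) deterministic value functions and invoking monotone convergence. With deterministic dynamics $s' = T(s,a)$ the decoupled update is
\[
(\mathcal{T}V)(s) = \Omega^\star_{\tau/\log|\Af(s)|}\!\big(R(s,\cdot) + V(T(s,\cdot))\big),
\]
i.e.\ soft value iteration run at the state-dependent temperature $\tau/\log|\Af(s)|$. Proving Proposition~\ref{prop1} then amounts to showing this iteration does not blow up to $+\infty$ (the pathology exhibited by Example~\ref{lem:loopy}) and in fact converges.

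First I would record the two elementary log-sum-exp bounds, $\max_a x_a \le \tfrac{1}{\beta}\log\sum_a e^{\beta x_a} \le \max_a x_a + \tfrac{\log|\Af(s)|}{\beta}$, and observe that the whole point of decoupling is that at $\beta=\log|\Af(s)|/\tau$ the upper slack $\log|\Af(s)|/\beta$ collapses to the constant $\tau$, independent of $|\Af(s)|$ (the range $L(\Omega,\Af(s))=\log|\Af(s)|$ from Table~\ref{tab:omega} cancels the temperature). Taking $x_a = R(s,a)+V(T(s,a))$ gives, pointwise,
\[
\max_a\big(R(s,a)+V(T(s,a))\big) \;\le\; (\mathcal{T}V)(s) \;\le\; \tau + \max_a\big(R(s,a)+V(T(s,a))\big).
\]

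Next I would introduce two ordinary deterministic Bellman optimality operators, $\check{\mathcal{T}}$ with rewards $R(s,a)$ and $\hat{\mathcal{T}}$ with shifted rewards $R(s,a)+\tau$, so the display above reads $\check{\mathcal{T}}V \le \mathcal{T}V \le \hat{\mathcal{T}}V$. Because $R(s,a) < -\tau$ by hypothesis, both $R$ and $R+\tau$ are strictly negative, so $\check{\mathcal{T}}$ and $\hat{\mathcal{T}}$ are the optimality operators of undiscounted deterministic shortest-path problems with strictly positive costs; with the absorbing/terminal state of the IRL model reachable from every state, their fixed points $\check V$ and $\hat V$ (the negated minimum-cost paths to termination) are finite, and all three operators are monotone since log-sum-exp is increasing in each argument.

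Finally I would run decoupled value iteration from $V_0=\hat V$: a one-step check gives $\mathcal{T}\hat V \le \hat{\mathcal{T}}\hat V = \hat V$, so by monotonicity $V_k = \mathcal{T}^k\hat V$ is nonincreasing, while $V_0=\hat V \ge \check V$ together with $\mathcal{T}V \ge \check{\mathcal{T}}V$ yields $V_k \ge \check V$ for all $k$ by induction. A nonincreasing sequence bounded below by the finite $\check V$ converges pointwise to a finite fixed point of $\mathcal{T}$, as claimed. The main obstacle is \emph{not} the upper bound, which is immediate once decoupling pins the range to the constant $\tau$; it is making ``convergence'' precise, namely supplying the lower envelope $\check V$ and the reachability of a terminal state so that the monotone iterates settle at a finite value rather than drifting to $-\infty$. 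I would also note that the uniform threshold $-\tau$ is a conservative, worst-case-over-$|\Af(s)|$ version of the exact per-state thresholds, consistent with the sharper $-\tau\log n/\log(n+1)$ obtained by recomputing Example~\ref{lem:loopy} under decoupling.
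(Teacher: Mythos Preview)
Your argument is correct and takes a genuinely different route from the paper. The paper's proof is a two-line verification of an existing sufficient condition: by Remark~2 of \citet{mai2022undiscounted}, a finite solution exists whenever $\sum_{a\in\Af(s)}\exp\!\big(R(s,a)/\tau'\big)<1$ at every state, with $\tau'$ the effective temperature; under decoupling $\tau'=\tau/\log n$ for $n=|\Af(s)|$, so each summand satisfies $\exp\!\big(R(s,a)\log n/\tau\big)<\exp(-\log n)=1/n$ once $R(s,a)<-\tau$, and the $n$-term sum is below $1$. You instead sandwich the decoupled soft operator between two hard-max operators with rewards $R$ and $R+\tau$ and run monotone iteration from the upper envelope's fixed point; the key step---that decoupling collapses the log-sum-exp slack $\log|\Af(s)|/\beta$ to the constant $\tau$---is precisely what turns the upper envelope into a strictly-negative-reward shortest-path problem. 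What your route buys is a self-contained, constructive argument that does not lean on the external reference and that surfaces the terminal-state reachability assumption explicitly (the paper leaves this implicit in the citation); what the paper's route buys is brevity, since the cited sub-stochasticity condition already packages the whole convergence argument.
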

\begin{proof}
    If $\sum_{a\in\Af(s)} \exp R(s, a)/\tau < 1$, a solution always exists \citep[Remark~2]{mai2022undiscounted}. Since $\exp (R(s, a)\log n/\tau) < 1/n$, the model always has a solution.
\end{proof}

\section{Automatic temperature for regularized MDPs}\label{sec:auto}

\cite{haarnoja2018soft} proposed adding a lower bound on the entropy of the policy to find the right temperature. Concretely, they propose adding the constraint $H(\pi(\cdot|s))\geq \bar{H}(\Af(s))$ for some function $\bar{H}$ to the Bellman equation. They propose using the dual of the aforementioned constraint as the temperature leading to Algorithm~\ref{alg:four}. We note that we parametrize $\tau$, the dual variable and temperature, in terms of its logarithm so that it stays positive. While this change deviates from \cite{haarnoja2018soft}'s notation, it better reflects their actual implementation.

\begin{algorithm}[H]
\caption{Soft actor-critic's update}\label{alg:four}
\begin{algorithmic}
\State $\text{$s$ is sampled from $\mathbb{P}^0$}$
\While{$1$}
\State $\text{$a$ is sampled from $\pi(\cdot|s)$}$
\State $\text{$s'$ is sampled by playing $a$}$
\State $\theta \gets \theta - \lambda \nabla_{\theta} J_Q(\theta)$ \Comment{Update critic (\ref{eq:jq})}
\State $\phi \gets \phi - \lambda \nabla_{\phi} J_\pi(\phi;\tau)$  \Comment{Update policy (\ref{eq:jpi})}
\State $\log\tau \gets \log\tau + \lambda \nabla_{\log\tau} J_{\log\tau}(\log\tau;\bar{H}(\Af(s))$  \Comment{Update temperature (\ref{eq:jlt})}
\State $s\gets s'$
\EndWhile
\end{algorithmic}

\end{algorithm}

\begin{subequations}
\begin{align}
    J_Q(\theta) &= \mathbb{E}_{a'\sim\pi(\cdot|s')}[(Q_\theta(s, a) - (r + \gamma Q_\theta(s', a') - \diff{\tau}\log \pi_\phi(a'|s'))]\label{eq:jq} \\
    J_\pi(\theta) &= \mathbb{E}_{a\sim\pi(\cdot|s)}[\tau\log\pi_\phi(a|s) - Q_\theta(s,a)]\label{eq:jpi}\\
    J_{\log\tau}(\log\tau;\Af(s)) &= \tau\mathbb{E}_{a\sim\pi(\cdot|s)}[-\log\pi - \bar{H}(\Af(s))]\label{eq:jlt}
\end{align}
\end{subequations}

\cite{haarnoja2018soft} propose using the negative dimensions of the actions as the target entropy. So if the action is a vector in $\mathbb{R}^n$, $\bar{H}$ is $-n$. Their proposed solution has two downsides: First, there is no reason that the same heuristic would be meaningful if another regularizer, for instance, if Tsallis entropy, was used. Second, \cite{haarnoja2018soft}'s heuristic does not reflect the action space. Both of these points are easy to illustrate; if the action space is a real number from -5e-3 to 5e-3, the maximum entropy is -2, which would be lower than \cite{haarnoja2018soft}'s heuristic, and make the problem infeasible. It is important to stress that $\tau$ will grow to infinity if the target entropy $\bar{H}$ is infeasible.

To remedy these, we propose a $\bar{H}$ inspired by the range of the regularizer $\Omega$. Concretely, we argue that 
\begin{equation}\label{eq:auto}
    \Omega(\pi(\cdot|s))  - \sup_{\pi'\in\Delta(\Af(s))} \Omega(\pi') \leq -\alpha L(\Omega;\Af(s))
\end{equation}
should hold for some constant $\alpha$ between 0 and 1. Setting $\alpha$ to 0 is equivalent to disabling the constraint, and setting $\alpha$ to one results in $\pi$ becoming the minimum regularized (or maximum entropy) policy. Translating (\ref{eq:auto}) back to entropy yields
\begin{equation}
    H(\pi(\cdot|s)) \geq \alpha H(U) + (1 - \alpha) H(V),
\end{equation}
where $U$ is the uniform and $V$ is the minimum reasonable entropy policy a policy should have. We need to define $V$ as the minimum entropy policy as it is not defined for differential entropy. We note again that setting $\alpha$ to one yields the uniform and $\alpha$ to zero yields the minimum entropy policy. Note that $H(U)$ is the logarithm of the volume of the action space, i.e., the logarithm of the integral of the unit function over the action space. \diff{We discuss choosing $\alpha$ in the next section. The main difference between using this rule of thumb and the \cite{haarnoja2018soft}'s rule is that in the state-dependent action setting, it is not possible for the regularizer to become invalid; however, it is possible that the agent goes in a loop of states with small action spaces and cannot reach the entropy target.}

\section{Experiments}

In this section, we provide three sets of experiments: a toy MDP where the number of actions is a parameter, a set of experiments on the DeepMind Control suite \citep{tassa2018deepmind}, and lastly, the drug design MDP of \cite{bengio2021flow}{\footnote{All code is hosted at \url{https://github.com/SobhanMP/decoupled-soft-RL}}}.

\subsection{A Toy MDP}
To illustrate the importance of temperature normalization, we propose a toy MDP where the number of actions is a parameter. The state $s$ is an $n$ dimension vector in the natural non-zero numbers such that $s_i\leq m$ for all $i \in\{1,\ldots,n\}$ for some $m$. Each action increases or decreases one of the elements of $s$. Doing an action that would invalidate the state (for instance, make elements of $s$ zero) does not change the state. The agent starts at state $[1,1,\ldots,1]$. The agent receives a $-1$ reward for every time step that it has not reached the goal point $[m,m,\ldots,m]$. The episode terminates after reaching the goal point. When $n=2$, the MDP is a grid where the agent starts a the bottom left corner and receives a negative reward as long as it has not reached the top right corner. The agent can only move to neighboring states but not diagonal ones.  We display the expected time to exit with $\gamma=0.99$ and $\tau=0.4$ in Figure~\ref{fig:hypergrid}. The time to exit of SQL becomes very large for $n>6$. It is important to stress that if the temperature is less than 1, decoupled SQL cannot diverge by Proposition~\ref{prop1}. We also note that we have to set the temperature very low so that the SQL does not diverge at five dimensions, and thus, the decoupled version is fairly close to the shortest path. This example illustrates two main points: First, it highlights the importance of decoupling regularizers across benchmarks. By setting a unique temperature for all $n$ yields suboptimal behavior as the agent gains more regularization in higher dimension spaces, and the balance between reward and regularization is broken. Second, it highlights the improved convergence properties of decoupled entropy.

\begin{wrapfigure}{R}{0.4\textwidth}
    \centering
    \includegraphics[width=0.4\textwidth]{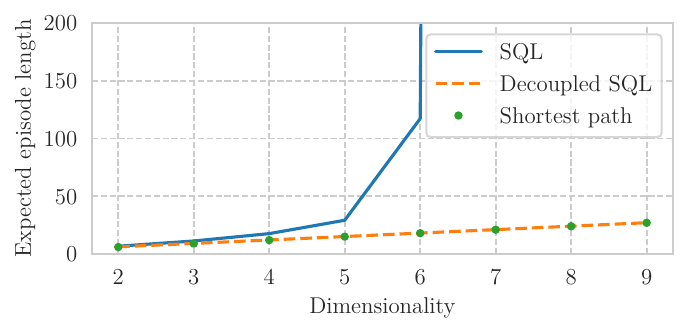}
    \caption{Episode length at different dimensions of the hypergrid problem.}
    \label{fig:hypergrid}
\end{wrapfigure}

\subsection{DeepMind control}

The maximum entropy in the DeepMind control (DMC) benchmark is $n \log (2\beta)$, where $n$ is the number of actions and $\beta$ is chosen such that the actions are in the $-\beta$ to $\beta$ range. In the first experiment, we fix the temperature to $0.25$. The test rewards over the training, shown in Figure~\ref{fig:fixtau}, do not worsen and can, in many instances, improve compared to the non-decoupled version. We provide the full experiments in Appendix~\ref{appen:dmc}. While the model is sensitive to changes in action space, the gain in performance can still be observed across different values of $\beta$. In addition to analyzing how changes in reward scale change the performance as \cite{henderson2018deep} suggests, we argue that it is also important to analyze how the performance changes in response to changes to the action space and range. We note that we do not use any scale invariant optimizer or loss function and that reaching full invariance to changes in action scale is beyond the scope of this work.

\begin{figure}[h]
    \centering
    \begin{subfigure}[t]{\textwidth}
        \includegraphics[width=\textwidth]{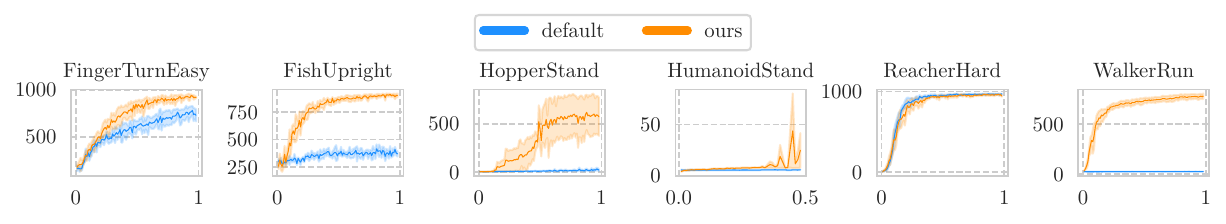}    
        \caption{Actions in $[-1, 1]$.}
    \end{subfigure}
    \begin{subfigure}[t]{\textwidth}
        \includegraphics[width=\textwidth]{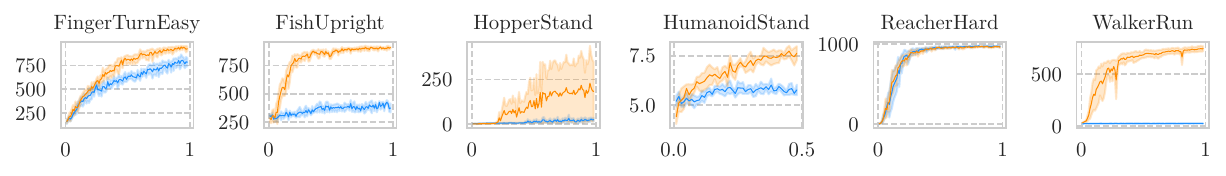}    
        \caption{Actions in $[-.25, .25]$.}
    \end{subfigure}
    \begin{subfigure}[t]{\textwidth}
        \includegraphics[width=\textwidth]{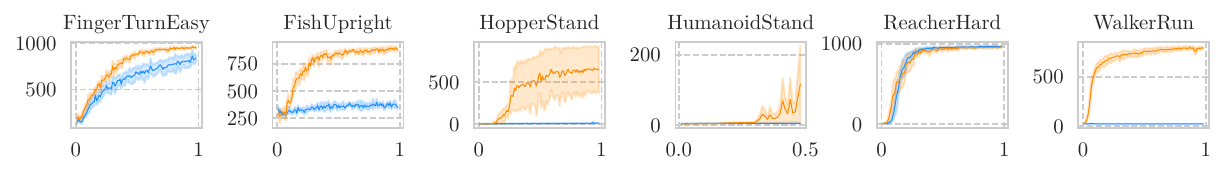}    
        \caption{Actions in $[-4, 4]$.}
    \end{subfigure}
    \caption{Test reward on the DMC benchmark with $\tau=0.25$. The X axis is the number of iterations divided by $1e6$.}
    \label{fig:fixtau}
\end{figure}

We now focus on the dynamic temperature setting. We chose $\alpha\approx0.77$ to get similar results as \citet{haarnoja2018soft} when the actions are in the $[-1,1]$ range, \diff{this is our recommended default. Otherwise, the alternative is finding the optimal $\alpha$ as one would with the temperature as the interpretation is similar, the higher $\alpha$, the higher the final temperature will be. }

\diff{As shown in Figure~\ref{fig:autotemp:wari}, \citet{haarnoja2017reinforcement}'s heuristic becomes infeasible, leading to very high temperatures. High temperatures, in turn, lead to learning failure. Figure~\ref{fig:autotemp:four}} shows similar performance as the temperature is extremely low for both models.

\begin{figure}[h]
    \centering
    \begin{subfigure}[t]{\textwidth}
        \includegraphics[width=\textwidth]{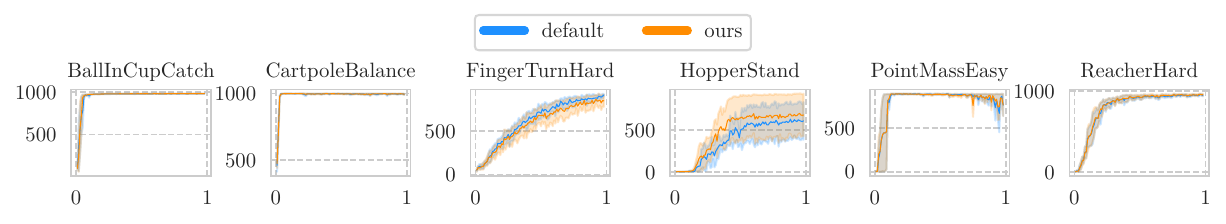}    
        \caption{Actions in $[-1, 1]$.}
    \end{subfigure}
    \begin{subfigure}[t]{\textwidth}
        \includegraphics[width=\textwidth]{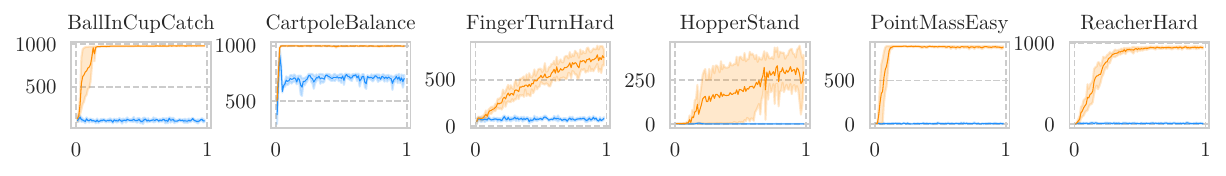}    
        \caption{Actions in $[-.1, .1]$.\label{fig:autotemp:wari}}
    \end{subfigure}
    \begin{subfigure}[t]{\textwidth}
        \includegraphics[width=\textwidth]{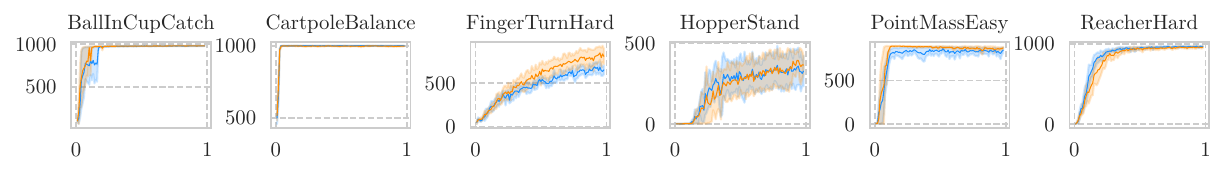}    
        \caption{Actions in $[-4, 4]$. \label{fig:autotemp:four}}
    \end{subfigure}
    \caption{Test reward on the DMC benchmark with automatic temperature. The x-axis is the number of iterations divided by $1e6$.}
    \label{fig:autotemp}
\end{figure}

\subsection{Comparison with generative flow networks}

Our final experiment involves the drug design by fragments of \citet{bengio2021flow}. In this MDP, an agent adds fragments, collections of atoms, to other fragments to build a molecule \citep[we refer to][for a more detailed description of the representation]{jin2018junction}. The agent can end the episode when the state is a valid molecule, making the horizon finite but random. Each molecule is represented as a tree, and each fragment is a node in this tree. Each tree corresponds to a unique and valid molecule. GFlowNets (GFN) aims to sample molecules proportionally to some proxy that predicts reactivity with some material \citep{bengio2021flow}. The goal is not to find only one molecule with a high reward but a diverse set of molecules with high rewards. As such, our main metric, other than high reward, is the number of modes or molecules that have a low similarity to other modes. We find the set of modes by iterating over the list of generated molecules and adding molecules that are not similar to any existing mode to that set.

It is beyond the scope of this work to properly introduce GFNs; we therefore simply state that they train policies to sample terminal states in proportion to an unnormalized distribution. \cite{bengio2021flow} imposes four constraints on the MDP: there should only be one initial state, each state is reachable from the initial states, no state is reachable from itself, and the transition function is deterministic. Lastly, \citet{bengio2021flow} assumes knowledge about the inverse dynamics. Concretely, for every state $s'$ they assume they have the list of all states $s$ that can reach $s'$, i.e., $\{s|\exists a\in\Af(s) \text{s.t.} \mathbb{P}(s'|s,a)> 0\}$. These assumptions are not always easy to satisfy. For instance, undoing actions is not \diff{trivially} possible with these assumptions. We use trajectory balance for the GFN loss \citep{malkin2022trajectory}. For algorithm parity, we use path consistency learning \citep{nachum2017bridging} as our SQL loss. \diff{We note that we use a static temperature.}

The results in Figure~\ref{fig:sql} show the median reward and number of modes found. Indeed, the median reward of decoupled SQL is higher than SQL and GFN through training. The left subplot shows that decoupled SQL finds many high-quality modes. While SQL over-regularizes states with more actions, leading to a policy that prefers to pass through these hub states with many actions, decoupled SQL does not have this problem. This result alone highlights the need for decoupling in the state-dependent action setting.

\begin{figure}
    \centering
    \begin{subfigure}[t]{0.4\textwidth}
     \includegraphics[width=\textwidth]{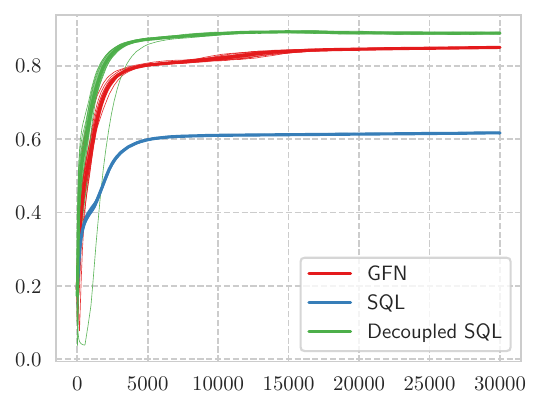}
    \end{subfigure}%
    \hfill%
    \begin{subfigure}[t]{0.4\textwidth}
     \includegraphics[width=\textwidth]{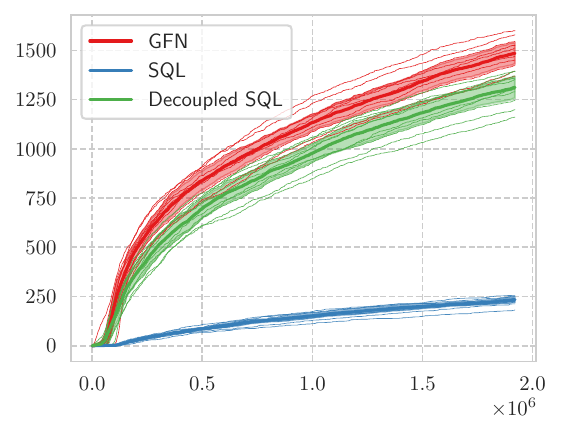}
    \end{subfigure}
    
    \caption[Comparison of different methods.]{The left plot is the median reward of each batch. The right is the number of modes found. The shaded area shows the interquartile range, and the heavy line shows the interquartile mean.}
    \label{fig:sql}
\end{figure}

\section{Conclusion} 

In this paper, we argued that the amount we regularize should not depend on the action space. For example, we should not have to change the temperature of our regularized MDPs when we change the units of our robots. To illustrate our point, we introduced standard regularizers, which include entropy. We showed that standard regularizers increase how much they regularize with the number of actions. We proposed that the range should not depend on the action space and introduced decoupled regularizers as regularizers whose range is constant. We showed that we can obtain decoupled regularizers from normal regularizers by dividing them by their range. While instead of decoupling, we can change the temperature manually, we argue that it is often not desirable for benchmarks and cannot solve the problem in the state-dependent action setting. We emphasize the broad applicability of our findings; both the static and dynamic temperature schemes work for all regularized MDPs. 

Perhaps most notably, our research has achieved unprecedented results in the domain of drug design. This is especially significant as \citet{bengio2021flow} did not include SQL in their results as they mentioned it was too unstable and inherently prefers larger molecules. However, we found that our decoupled regularizers with PCL resolved both issues, serving as the critical, missing component. The innate simplicity of SQL, adaptability in environments characterized by cycles, and independence from inverse dynamics, the need to know which states can reach another state that is fundamental to GFNs, accentuate its appeal, and underscore its suitability for MDPs. 

Our proposed method works regardless of the chosen regularizers but we only justified its use for \textit{standard} regularizers; of course, not all regularizers are standard. For instance, $\pi^\top A\pi$ for strictly positive definite matrix $A$ is only standard if $A$ is the identity matrix. We posit that the same approach we took here might be insightful in that there may exist a function similar to the range that should be kept constant by transforming the regularizer. Lastly, while we moved closer to regularized scale-independent RL by introducing regularized RL models that are not sensitive to changes in action space, we believe there is more work to be done on the optimization side of the problem to enhance the scale invariance properties further.

\subsubsection*{Acknowledgments}
We thank David Yu-Tung Hui and Emmanuel Bengio for helpful discussions around Q-learning, GFlowNets, and molecule design.
We acknowledge helpful discussions with Kolya Malkin, Moksh Jain, Matthieu Geist, and Olivier Pietquin. 
This research was supported by compute resources provided by  Calcul Quebec (\url{calculquebec.ca}), the BC DRI Group, the Digital Research Alliance of Canada (\url{alliancecan.ca}), and Mila (\url{mila.quebec})

\afterpage{\clearpage}

\bibliography{bib}

\appendix

\clearpage

\section{Extended Deepmind Control experiments}\label{appen:dmc}

We define our minimum entropy distribution $V$ as a uniform distribution over a $1e-3$ range. We argue that for practical purposes, any distribution with such a low is deterministic for all intents and purposes.

\begin{figure}
    \centering
    \includegraphics[width=\textwidth]{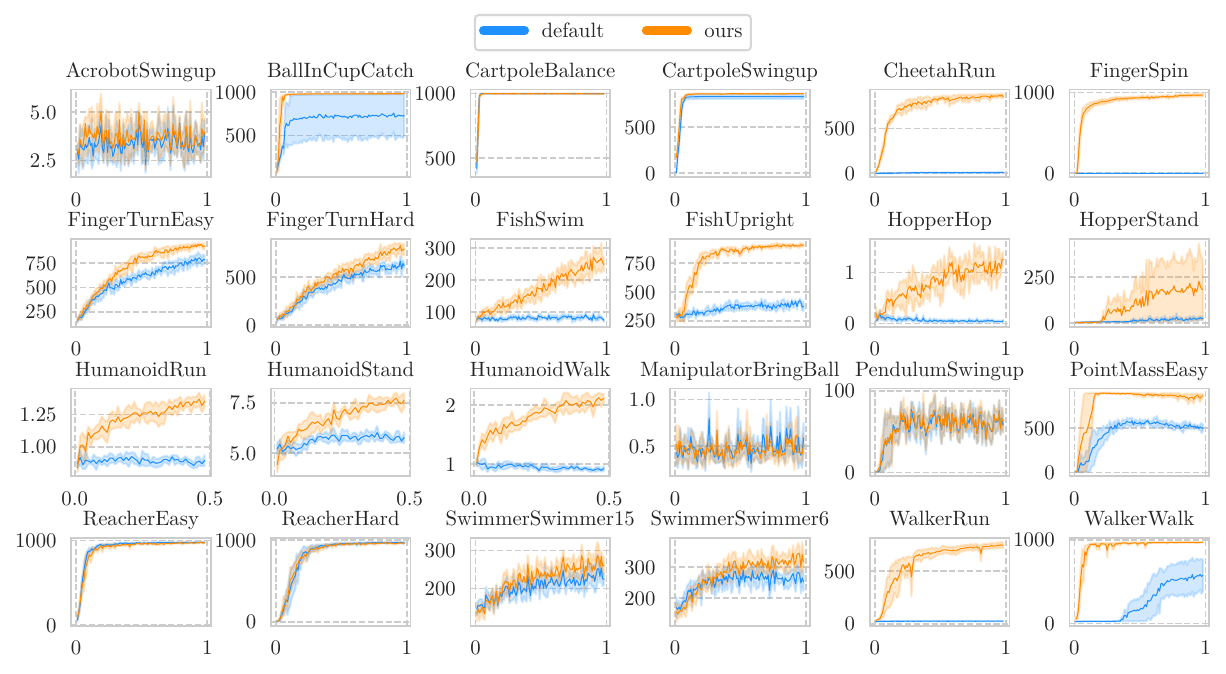}
    \caption{DMC test reward with the action scale set to 0.25. Static temperature.}
\end{figure}

\begin{figure}
    \centering
    \includegraphics[width=\textwidth]{rl/scale-0.25-scale-static-0924-dmc}
    \caption{DMC test reward with the action scale set to 0.25. Static temperature.}
\end{figure}

\begin{figure}
    \centering
    \includegraphics[width=\textwidth]{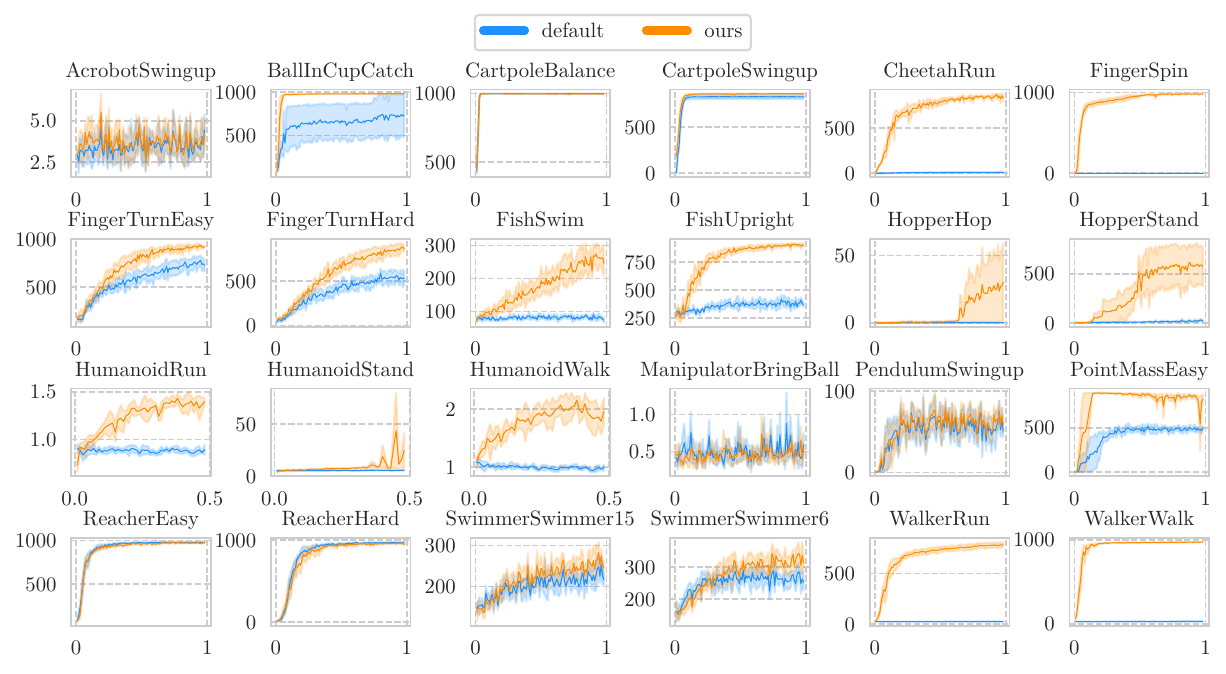}
    \caption{DMC test reward with the action scale set to 1. Static temperature.}
\end{figure}

\begin{figure}
    \centering
    \includegraphics[width=\textwidth]{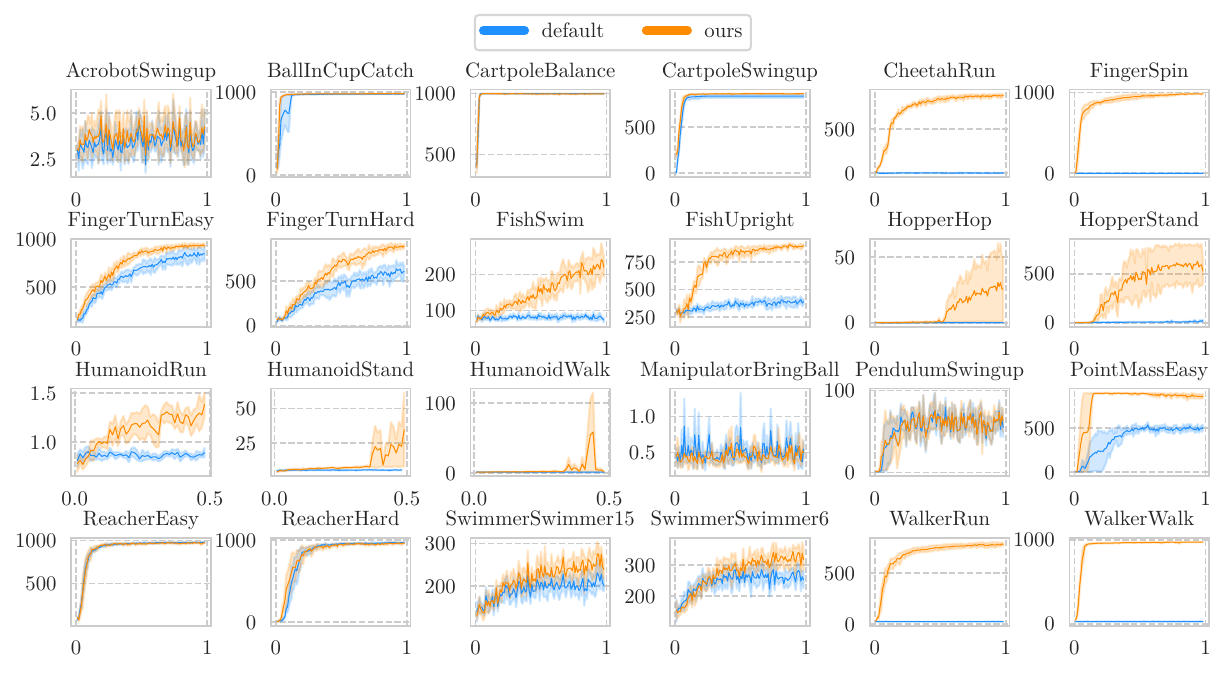}
    \caption{DMC test reward with the action scale set to 2. Static temperature.}
\end{figure}

\begin{figure}
    \centering
    \includegraphics[width=\textwidth]{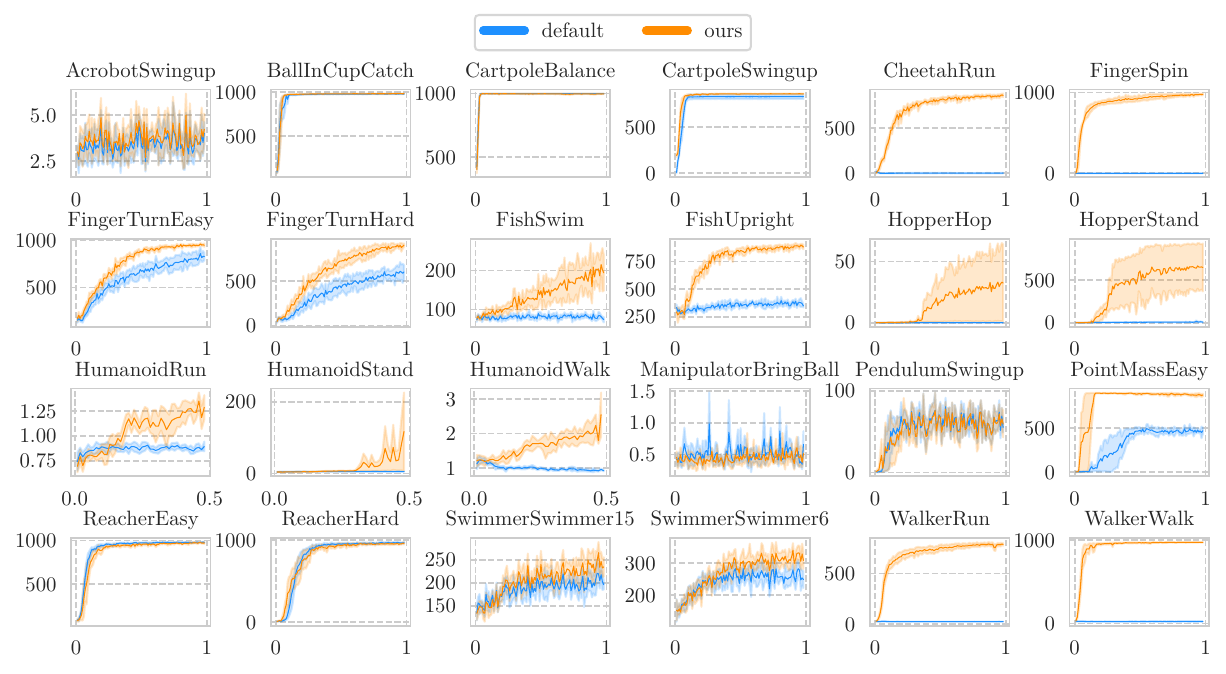}
    \caption{DMC test reward with the action scale set to 4. Static temperature.}
\end{figure}

\begin{figure}
    \centering
    \includegraphics[width=\textwidth]{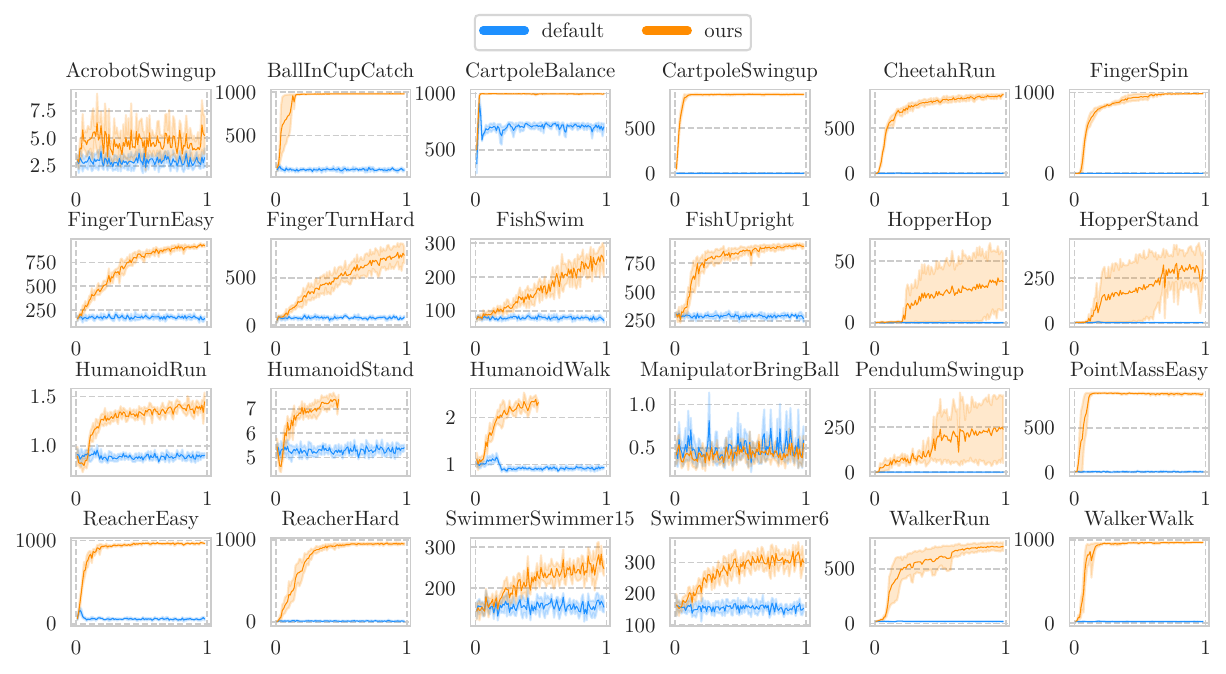}
    \caption{DMC test reward with the action scale set to .1. Dynamic temperature.}
\end{figure}

\begin{figure}
    \centering
    \includegraphics[width=\textwidth]{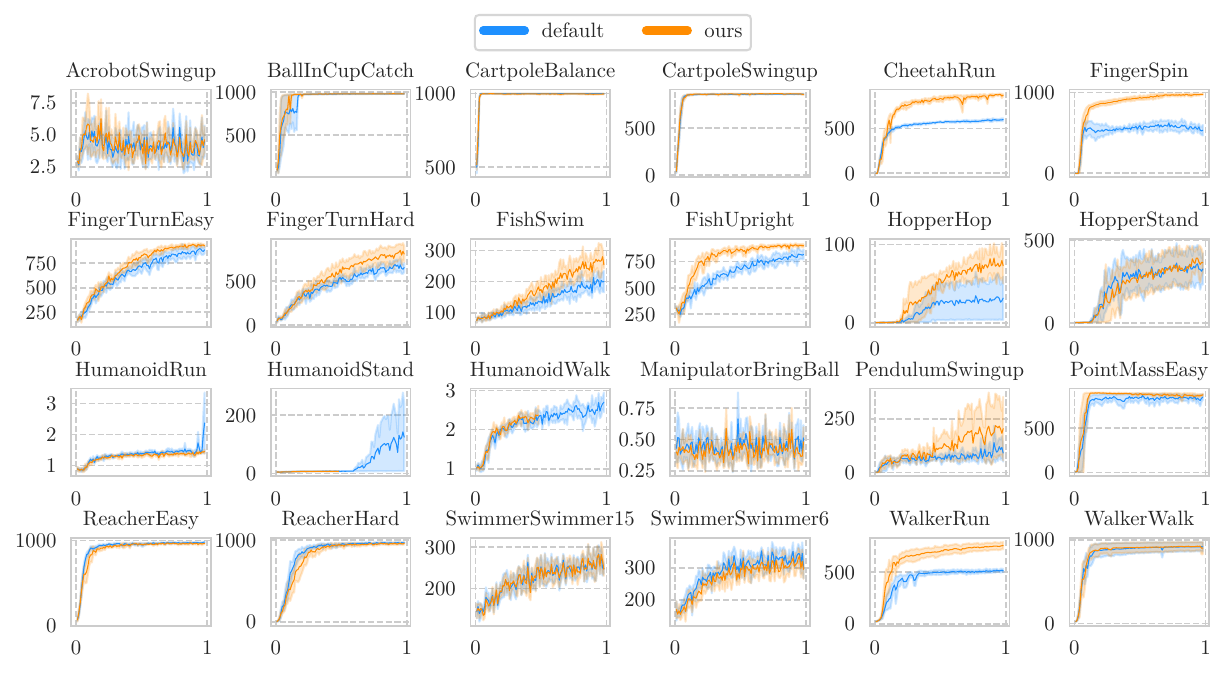}
    \caption{DMC test reward with the action scale set to .25. Dynamic temperature.}
\end{figure}

\begin{figure}
    \centering
    \includegraphics[width=\textwidth]{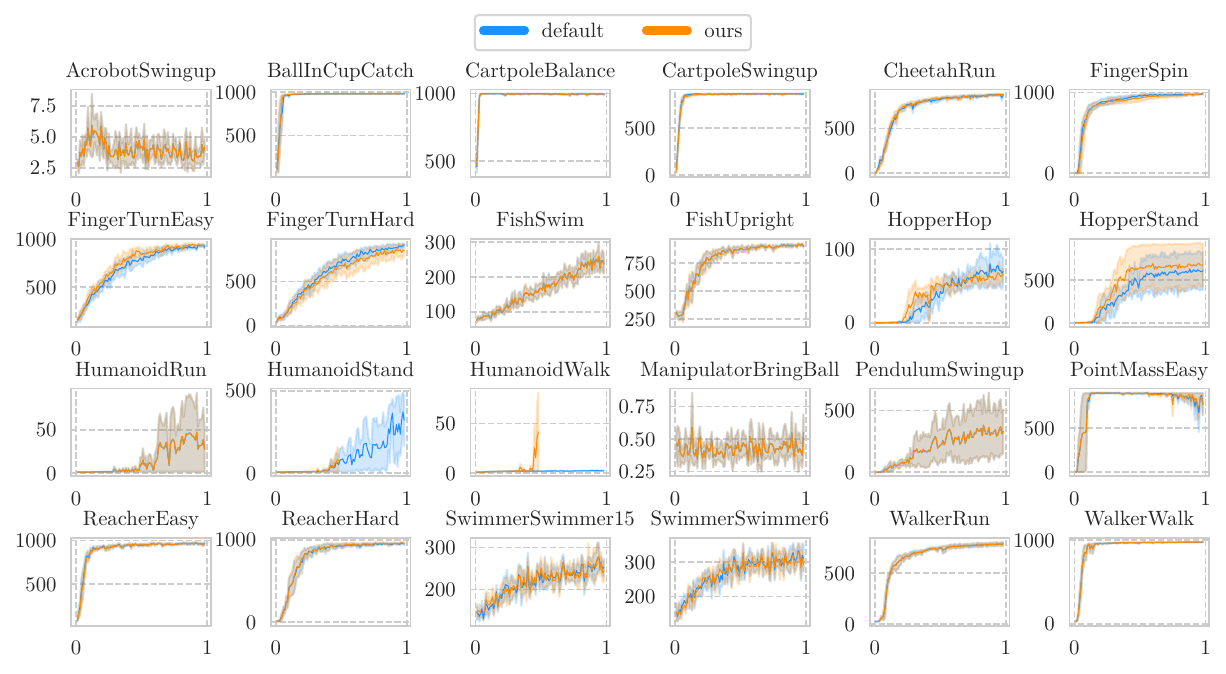}
    \caption{DMC test reward with the action scale set to 1. Dynamic temperature.}
\end{figure}

\begin{figure}
    \centering
    \includegraphics[width=\textwidth]{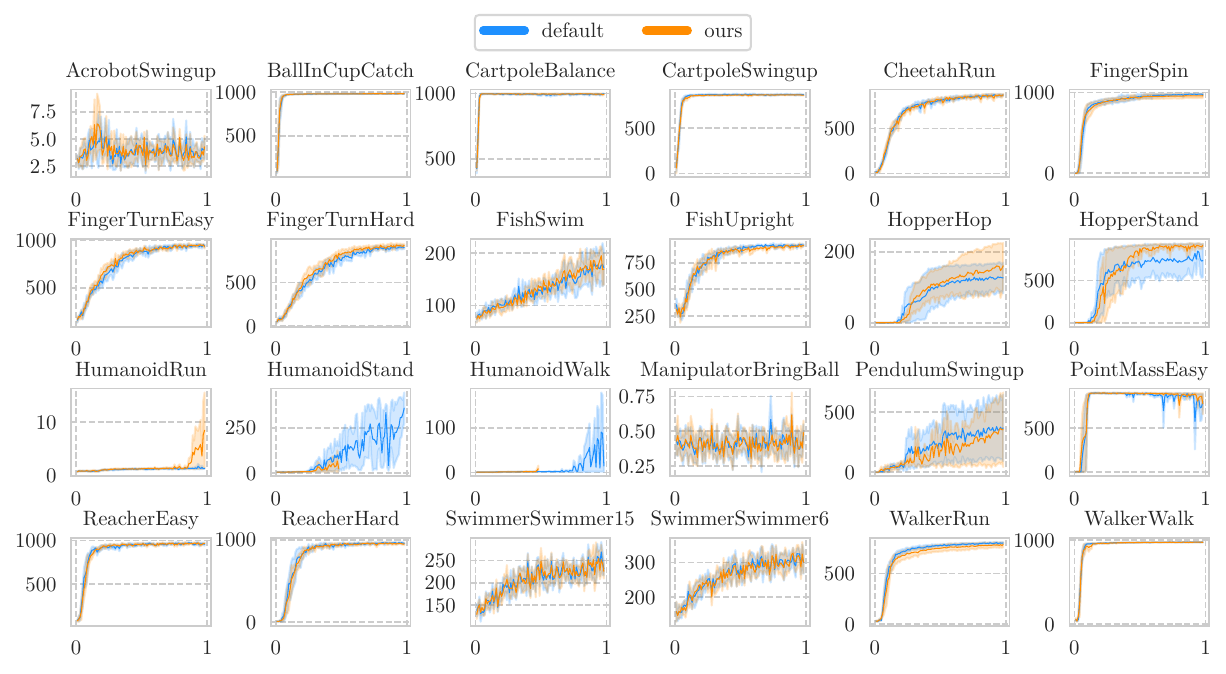}
    \caption{DMC test reward with the action scale set to 4. Dynamic temperature.}
\end{figure}

\begin{figure}
    \centering
    \includegraphics[width=\textwidth]{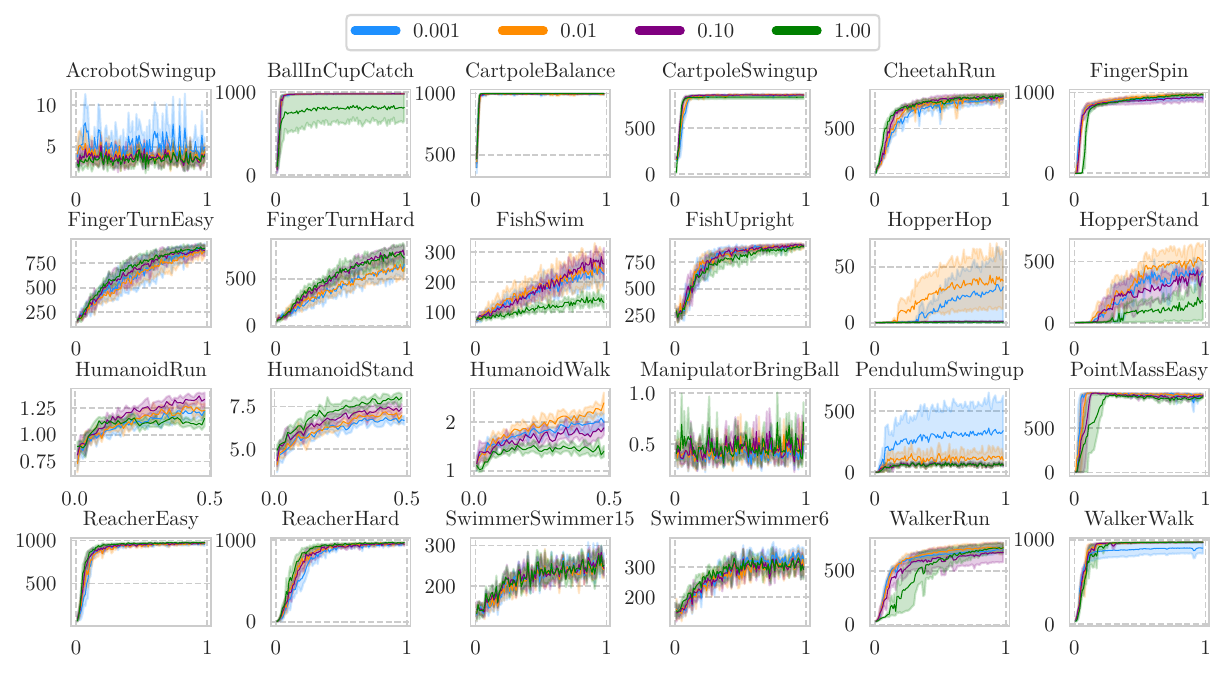}
    \caption{DMC test reward with the action scale set to 0.25. Static temperature and decoupling.}
\end{figure}
\begin{figure}
    \centering
    \includegraphics[width=\textwidth]{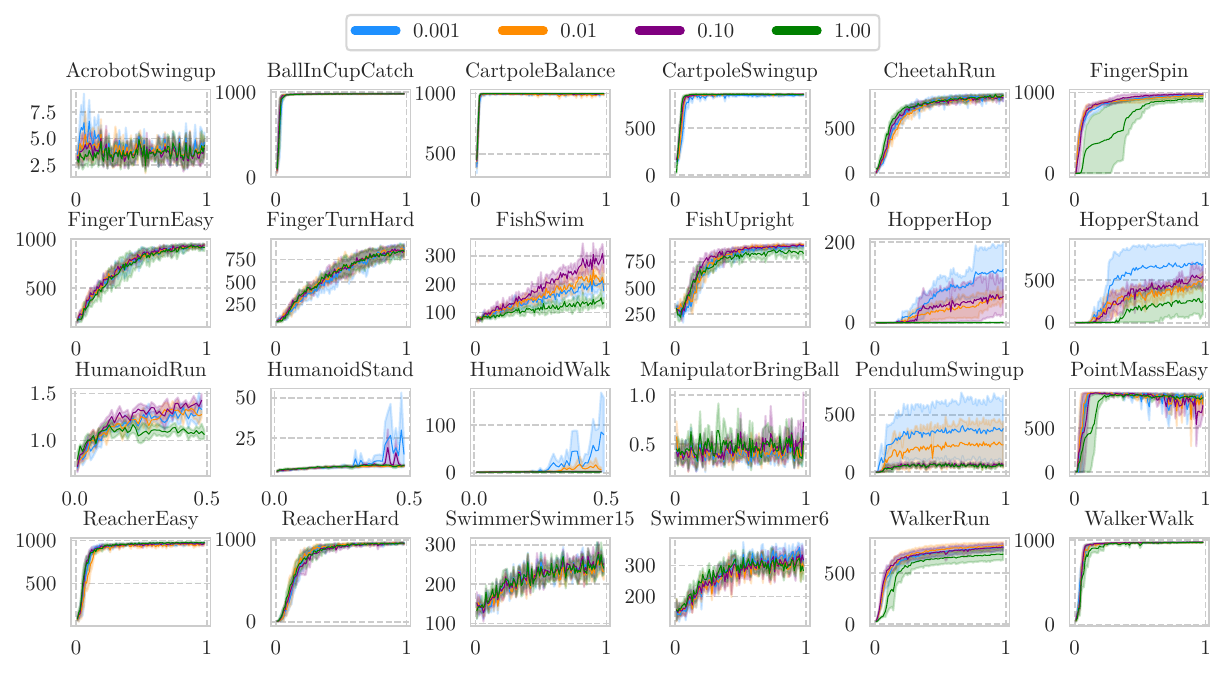}
    \caption{DMC test reward with the action scale set to 1. Static temperature and decoupling.}
\end{figure}

\begin{figure}
    \centering
    \includegraphics[width=\textwidth]{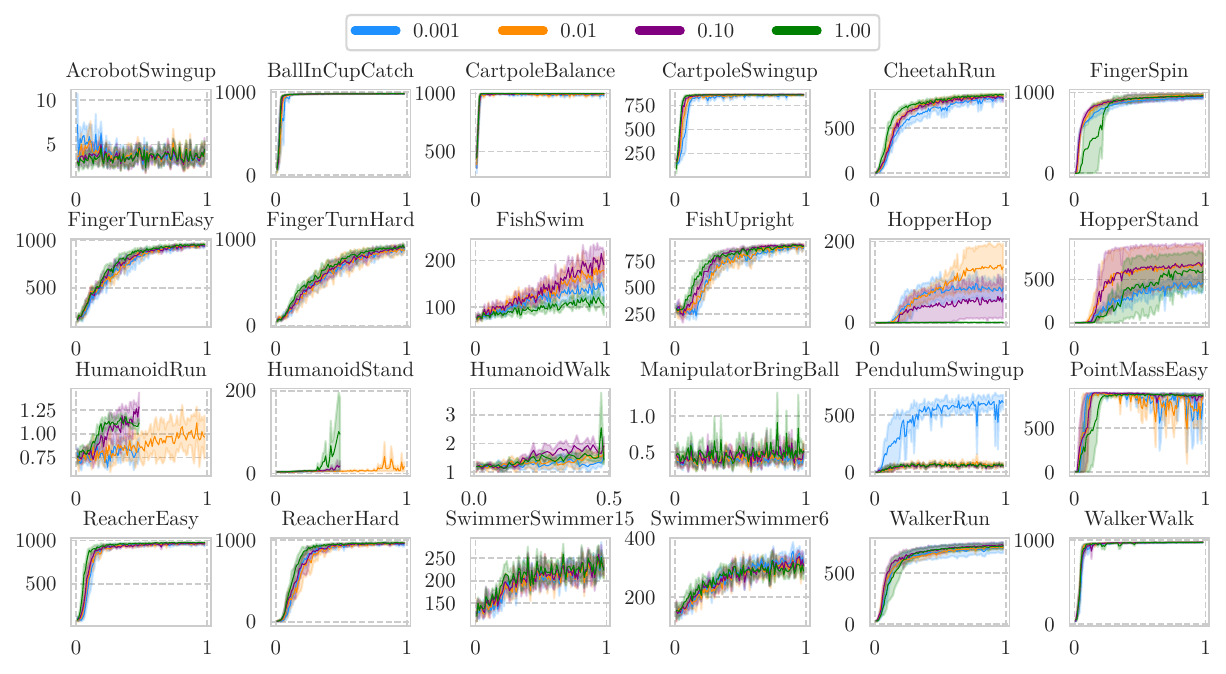}
    \caption{DMC test reward with the action scale set to 4. Static temperature and decoupling.}
\end{figure}

\begin{figure}
    \centering
    \includegraphics[width=\textwidth]{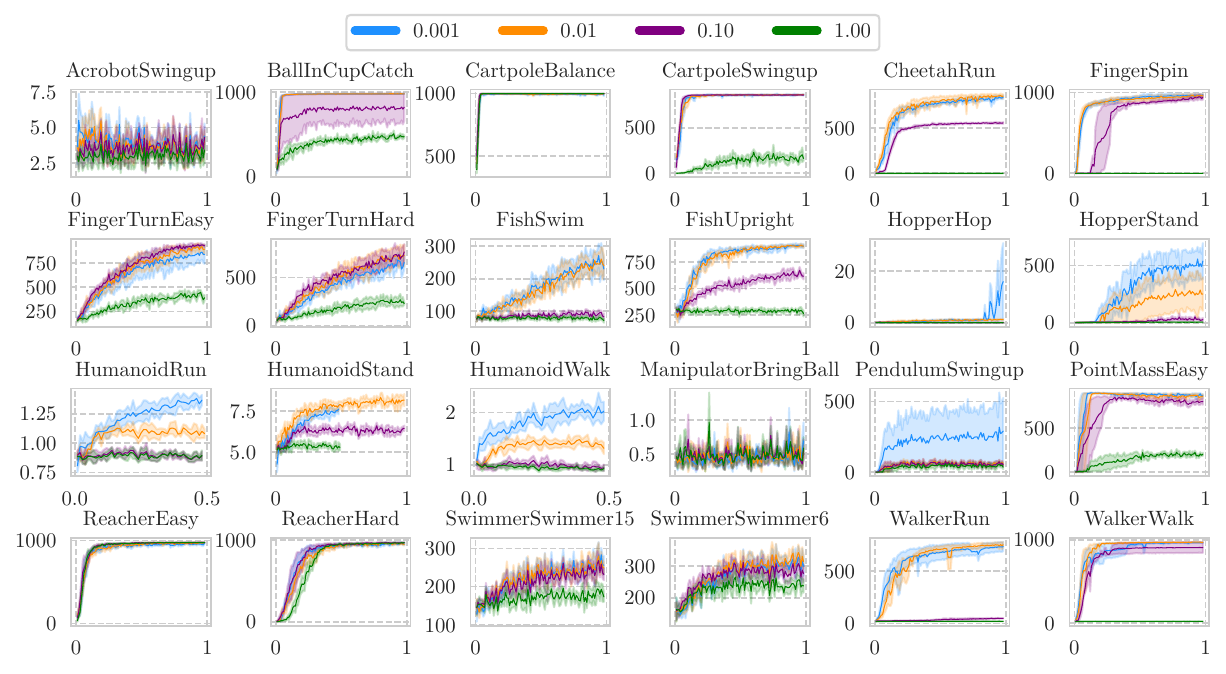}
    \caption{DMC test reward with the action scale set to 0.25. Static temperature and no decoupling.}
\end{figure}

\begin{figure}
    \centering
    \includegraphics[width=\textwidth]{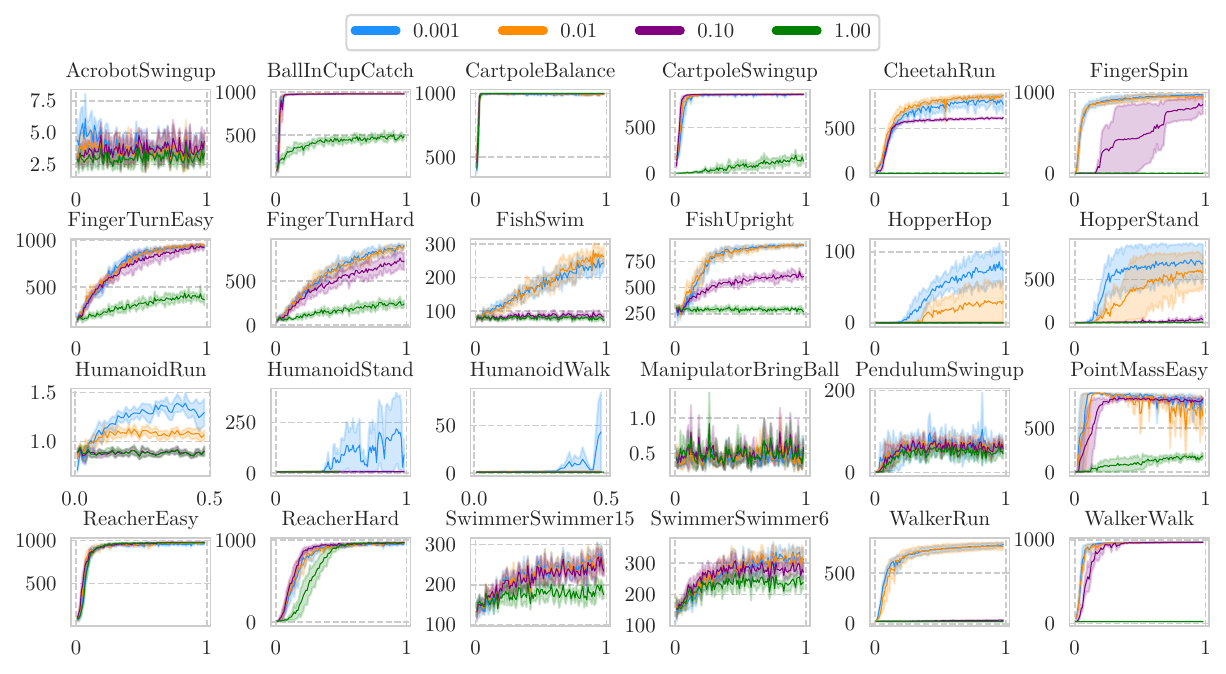}
    \caption{DMC test reward with the action scale set to 1. Static temperature and no decoupling.}
\end{figure}

\begin{figure}
    \centering
    \includegraphics[width=\textwidth]{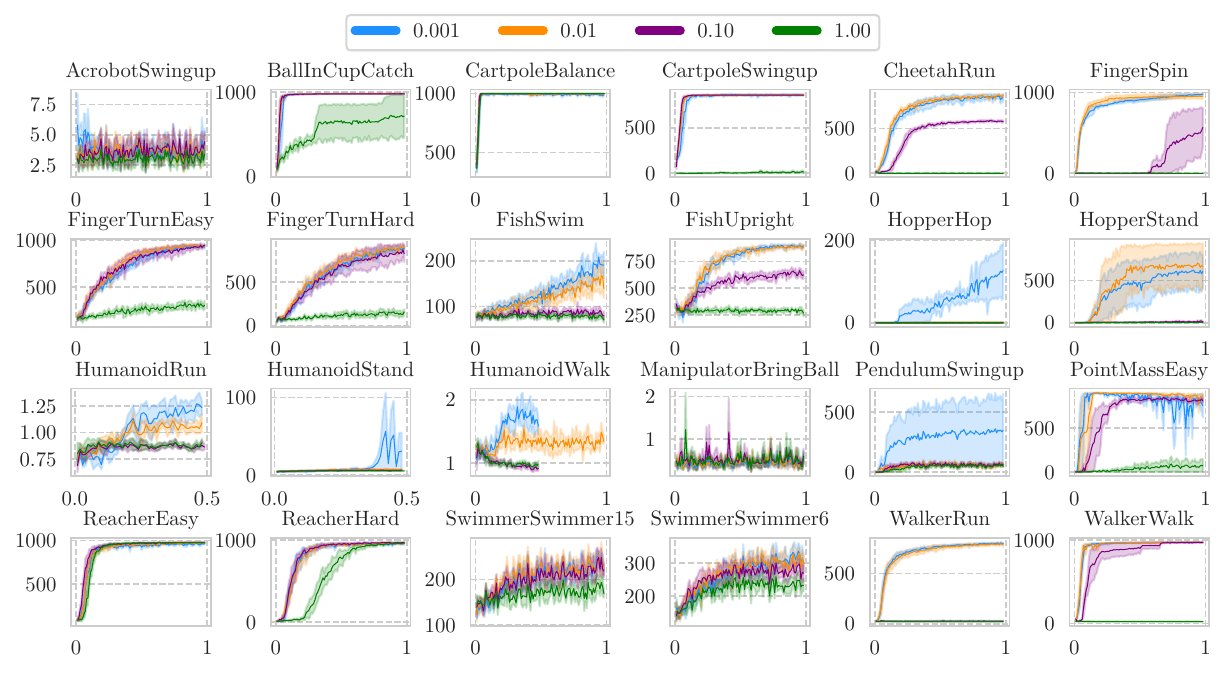}
    \caption{DMC test reward with the action scale set to 4. Static temperature and no decoupling.}
\end{figure}

\begin{figure}
    \centering
    \includegraphics[width=\textwidth]{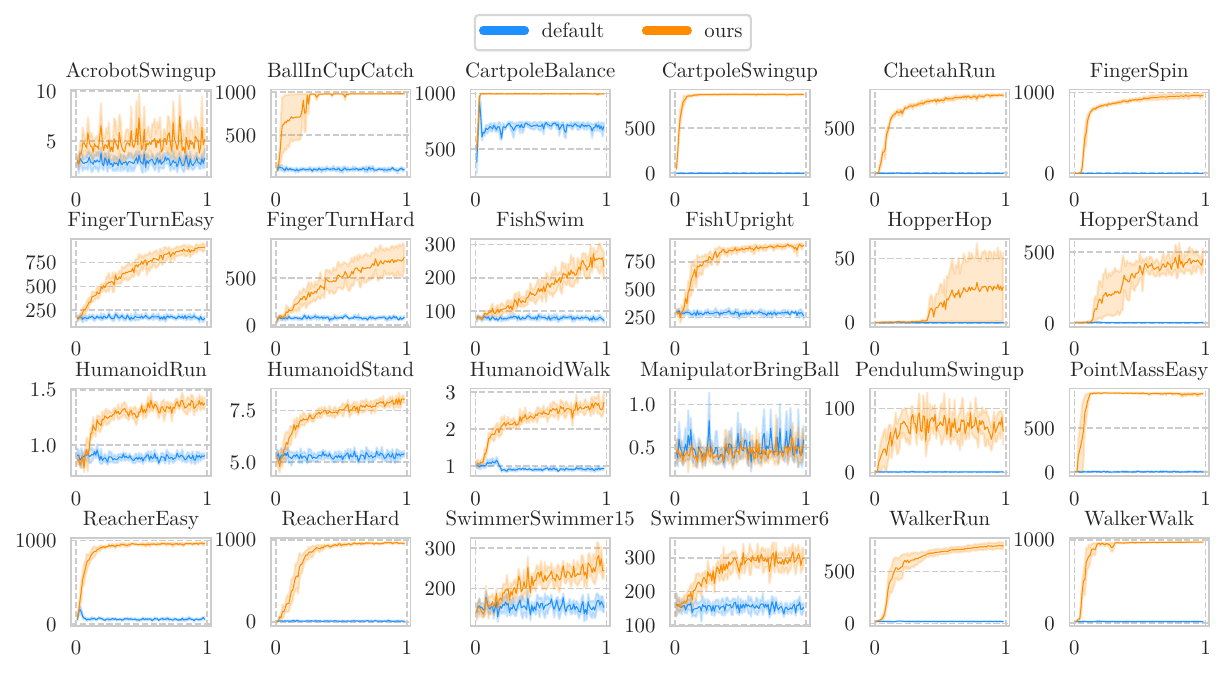}
    \caption{DMC test reward with the action scale set to 0.1. Dynamic temperature and decoupling. $\alpha=0.77$}
\end{figure}

\begin{figure}
    \centering
    \includegraphics[width=\textwidth]{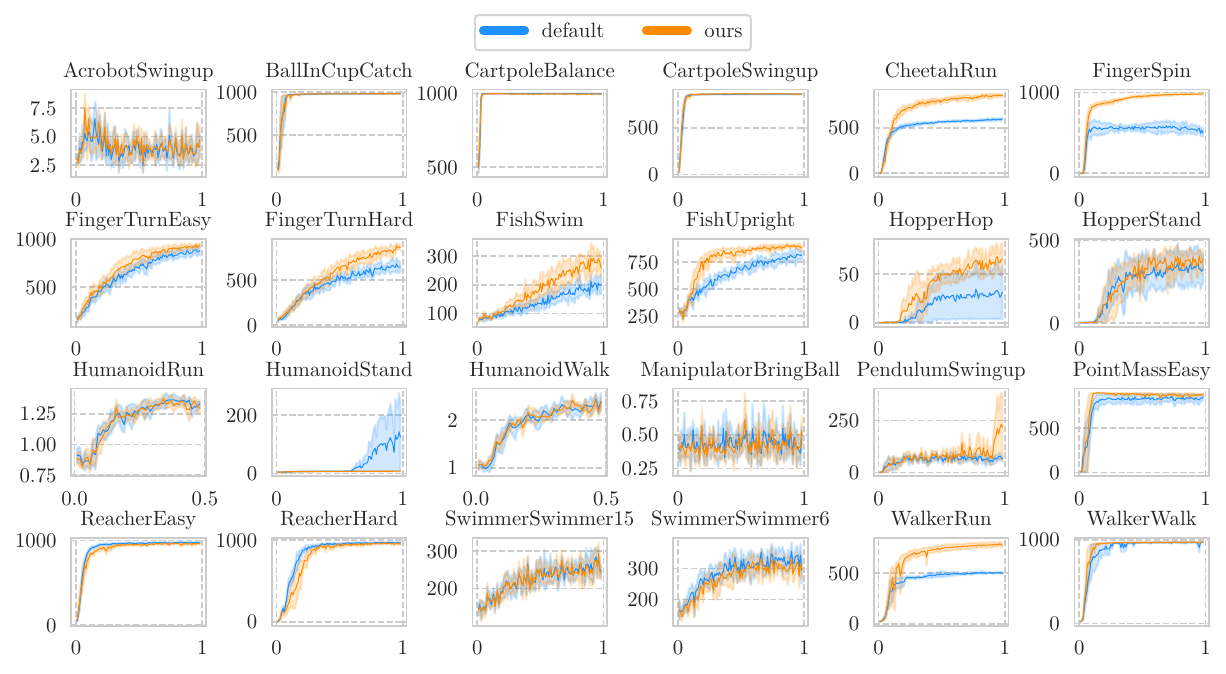}
    \caption{DMC test reward with the action scale set to 0.25. Dynamic temperature and decoupling. $\alpha=0.77$}
\end{figure}

\begin{figure}
    \centering
    \includegraphics[width=\textwidth]{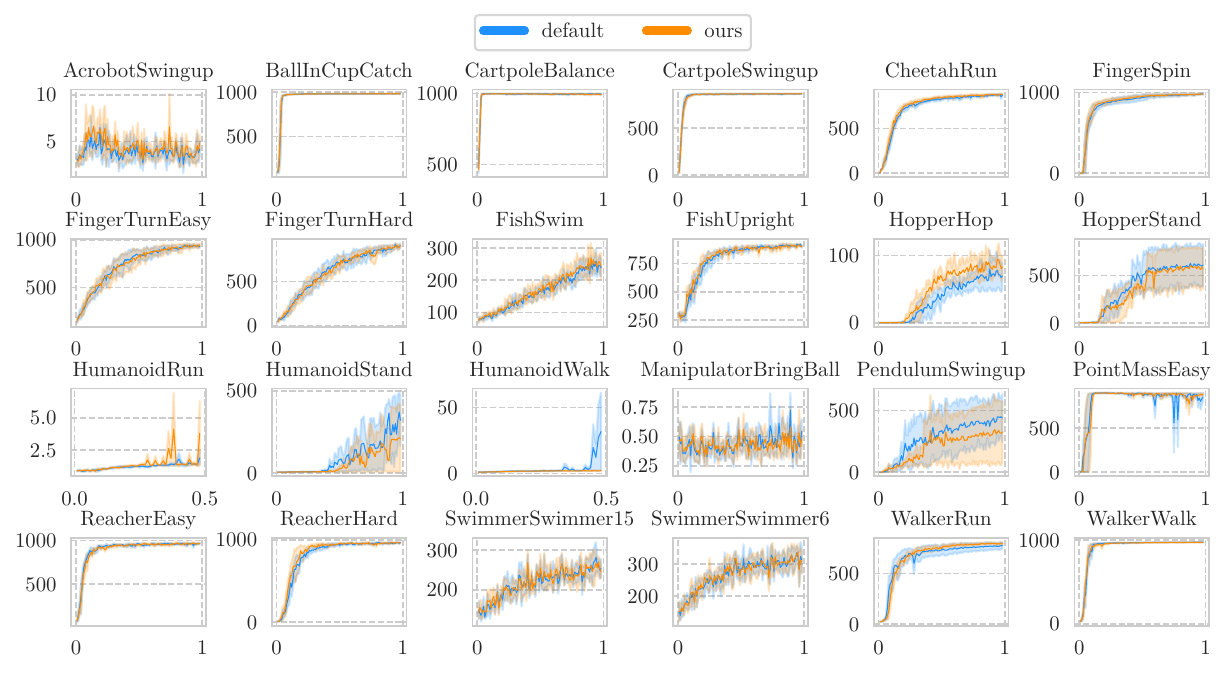}
    \caption{DMC test reward with the action scale set to 1. Dynamic temperature and decoupling. $\alpha=0.77$}
\end{figure}

\begin{figure}
    \centering
    \includegraphics[width=\textwidth]{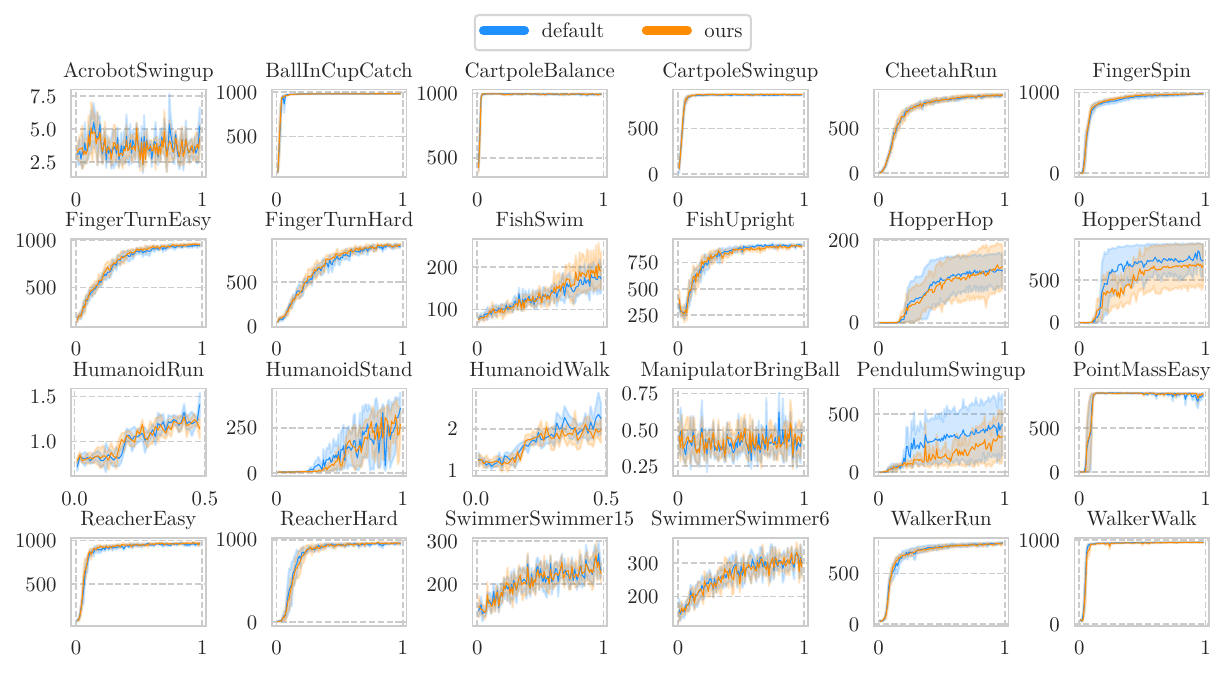}
    \caption{DMC test reward with the action scale set to 4. Dynamic temperature and decoupling. $\alpha=0.77$}
\end{figure}

\begin{figure}
    \centering
    \includegraphics[width=\textwidth]{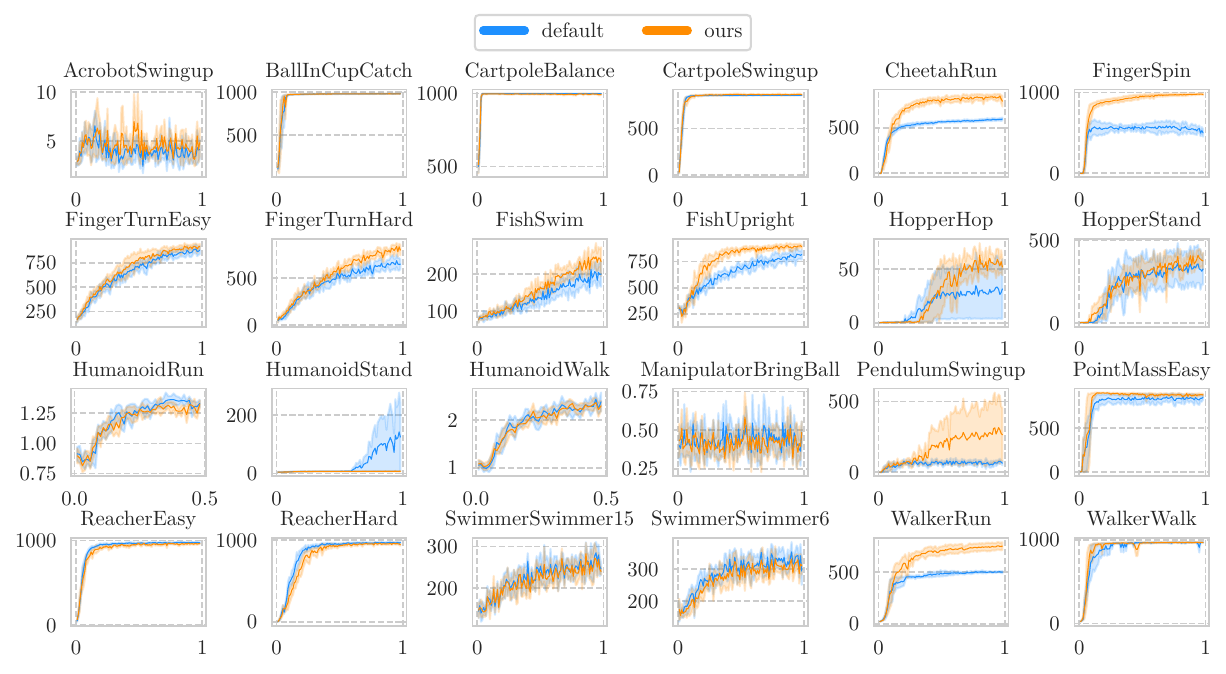}
    \caption{DMC test reward with the action scale set to 0.25. Dynamic temperature and decoupling. $\alpha=0.5$}
\end{figure}

\begin{figure}
    \centering
    \includegraphics[width=\textwidth]{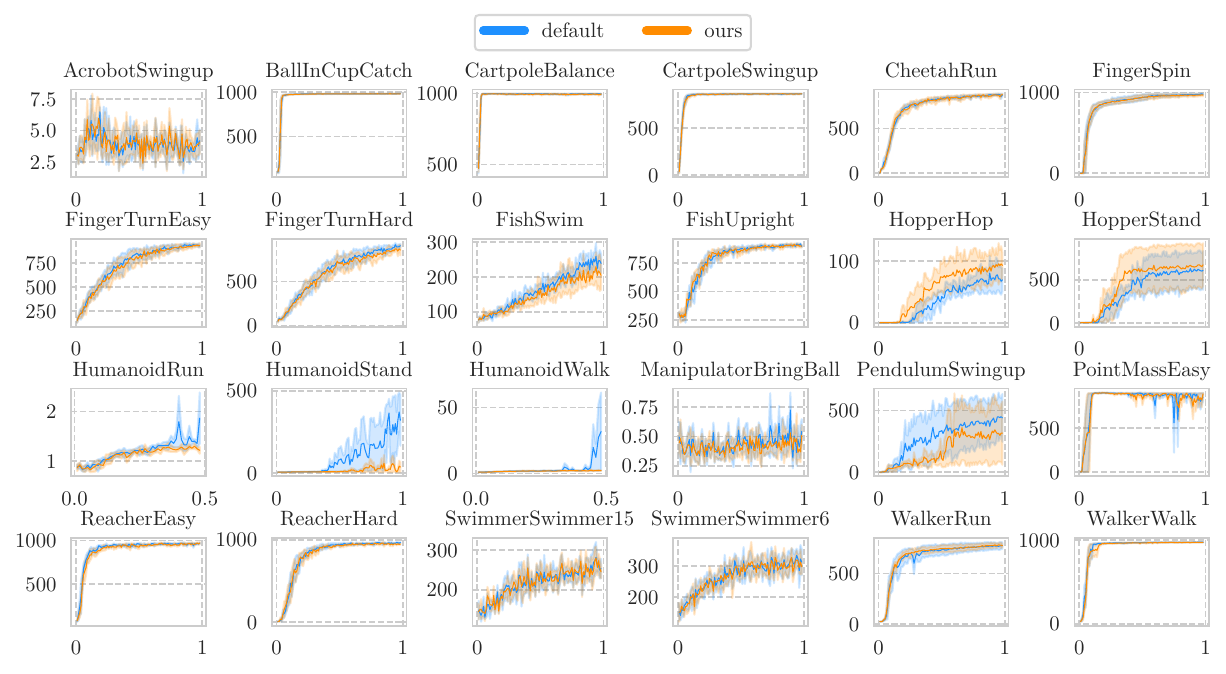}
    \caption{DMC test reward with the action scale set to 1. Dynamic temperature and decoupling. $\alpha=0.5$}
\end{figure}

\begin{figure}
    \centering
    \includegraphics[width=\textwidth]{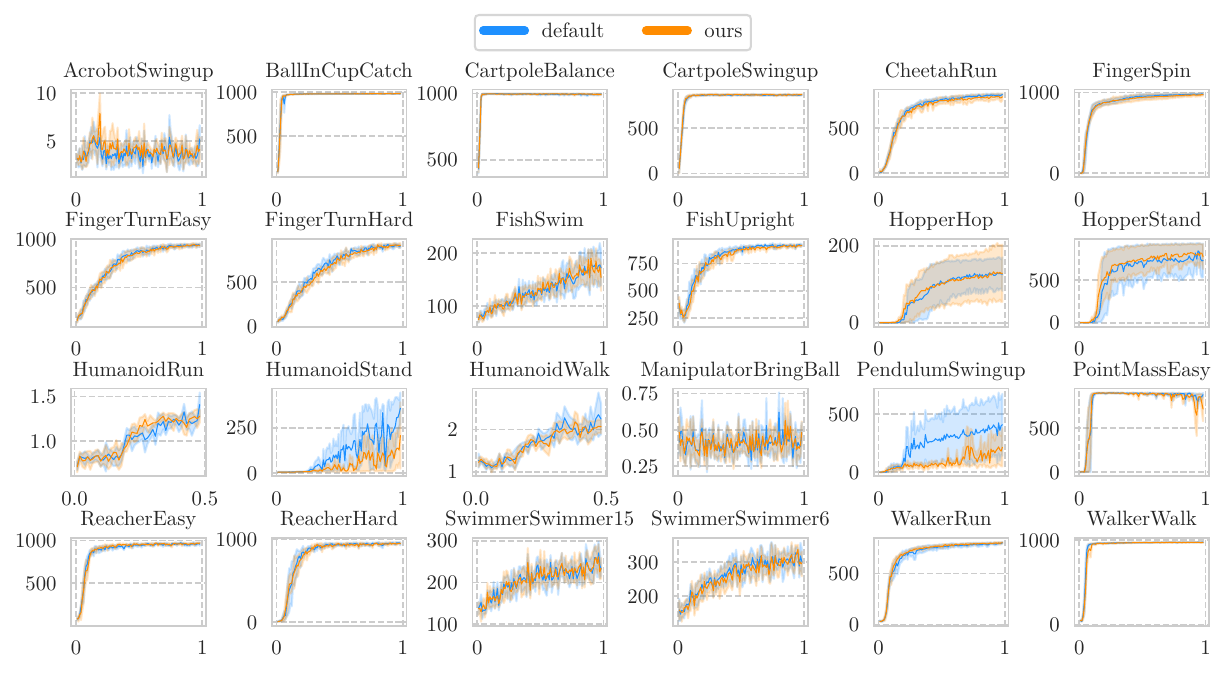}
    \caption{DMC test reward with the action scale set to 4. Dynamic temperature and decoupling. $\alpha=0.5$}
\end{figure}

\begin{figure}
    \centering
    \includegraphics[width=\textwidth]{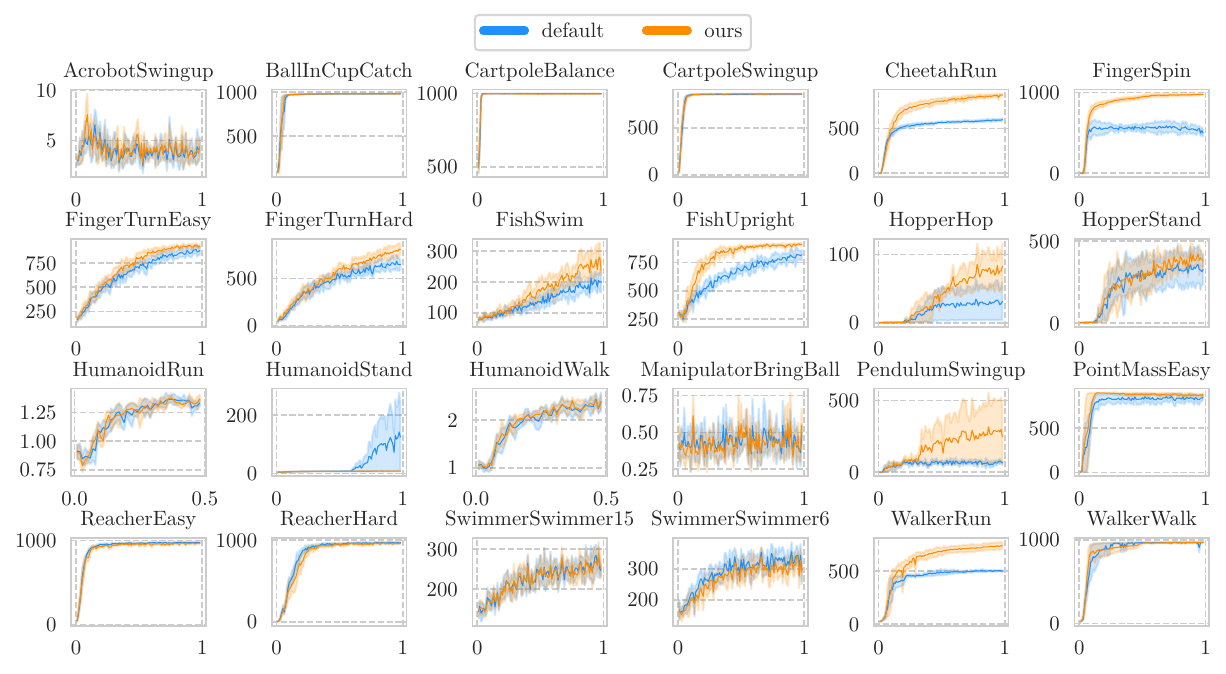}
    \caption{DMC test reward with the action scale set to 0.25. Dynamic temperature and decoupling. $\alpha=0.8$}
\end{figure}

\begin{figure}
    \centering
    \includegraphics[width=\textwidth]{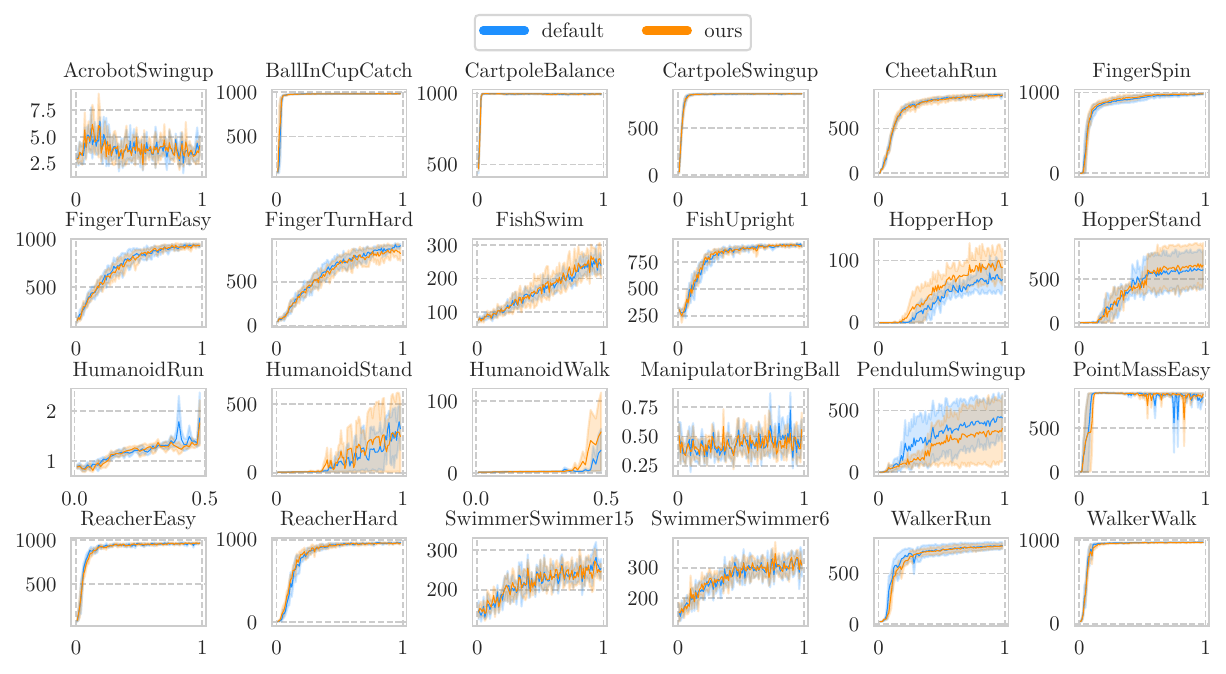}
    \caption{DMC test reward with the action scale set to 1. Dynamic temperature and decoupling. $\alpha=0.8$}
\end{figure}

\begin{figure}
    \centering
    \includegraphics[width=\textwidth]{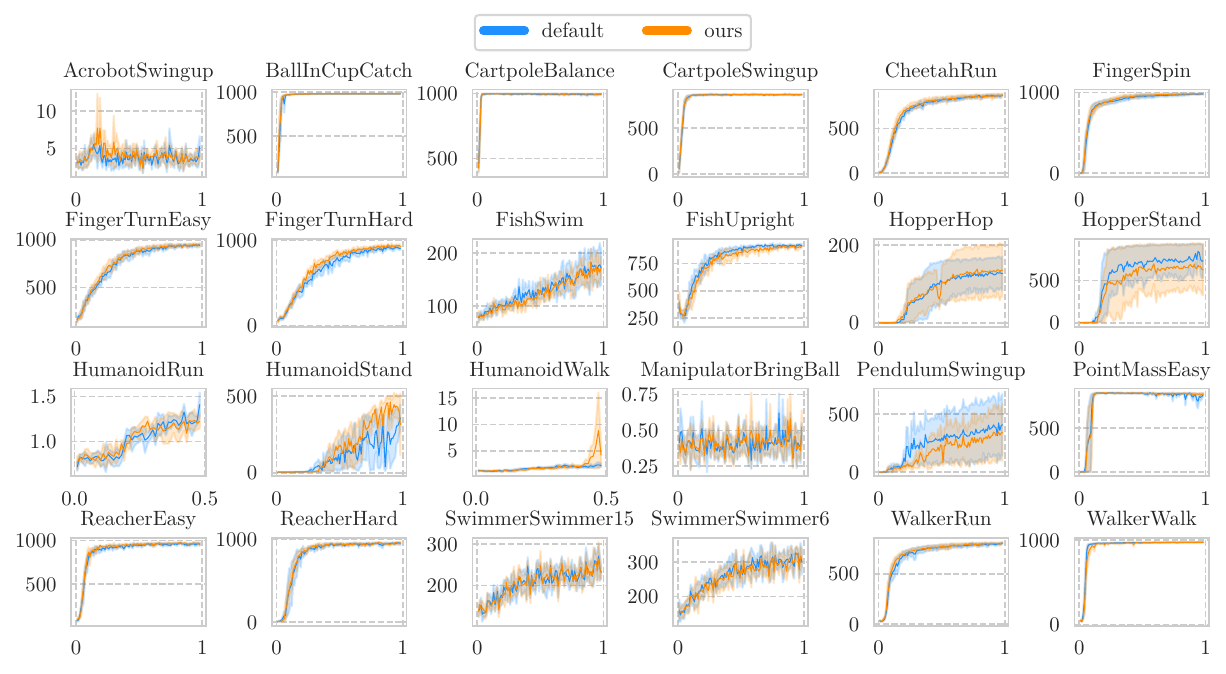}
    \caption{DMC test reward with the action scale set to 4. Dynamic temperature and decoupling. $\alpha=0.8$}
\end{figure}

\begin{figure}
    \centering
    \includegraphics[width=\textwidth]{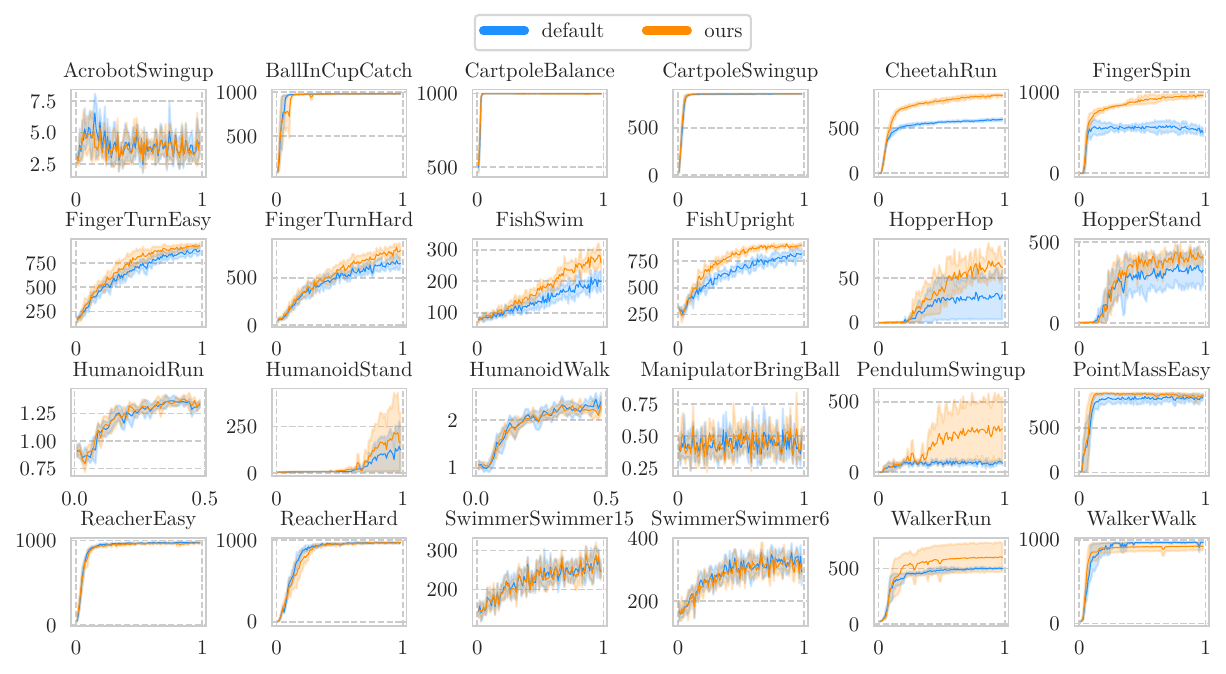}
    \caption{DMC test reward with the action scale set to 0.25. Dynamic temperature and decoupling. $\alpha=0.9$}
\end{figure}

\begin{figure}
    \centering
    \includegraphics[width=\textwidth]{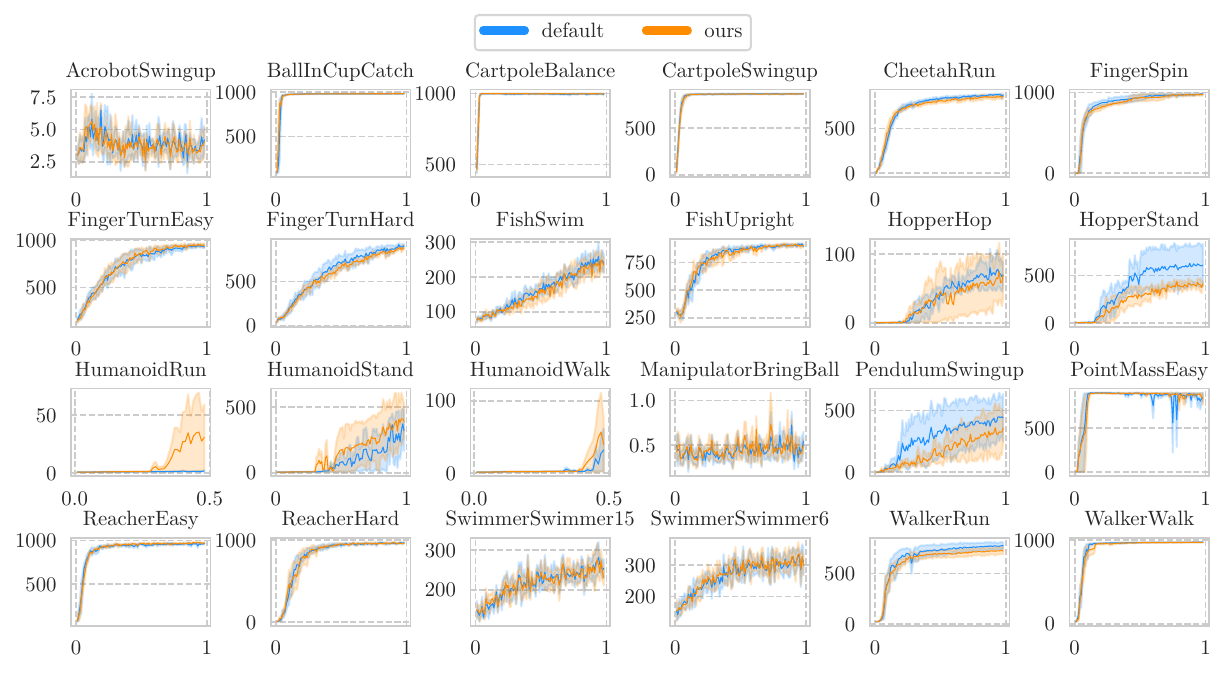}
    \caption{DMC test reward with the action scale set to 1. Dynamic temperature and decoupling. $\alpha=0.9$}
\end{figure}

\begin{figure}
    \centering
    \includegraphics[width=\textwidth]{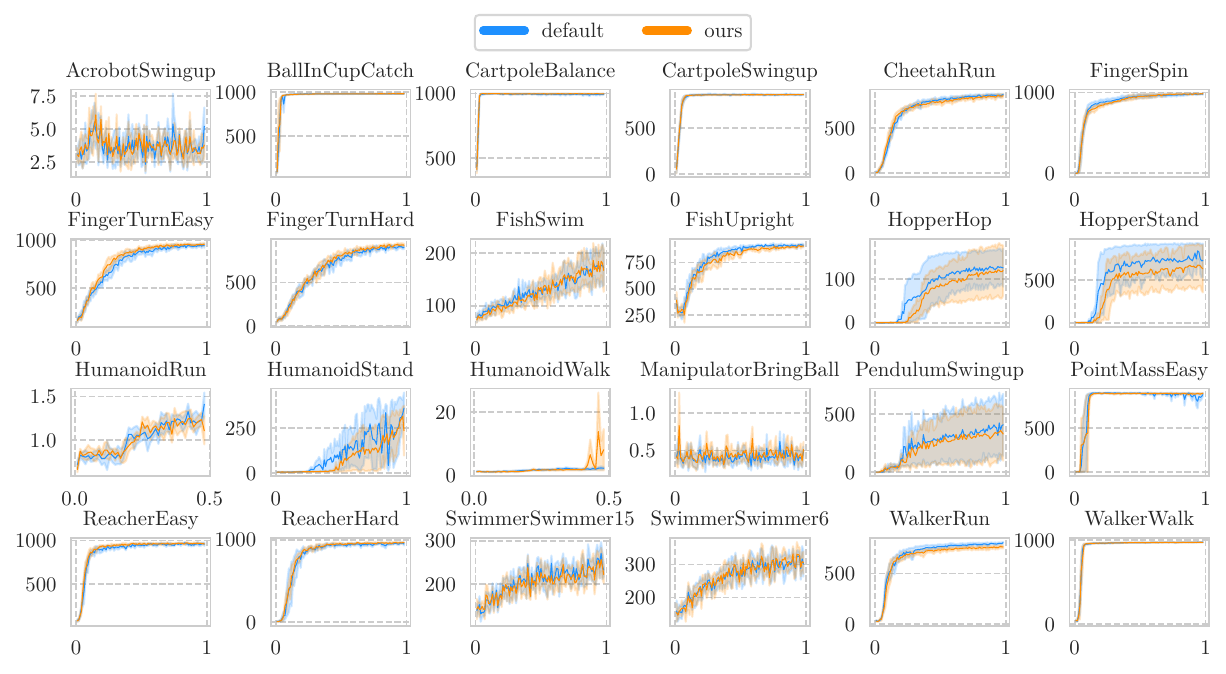}
    \caption{DMC test reward with the action scale set to 4. Dynamic temperature and decoupling. $\alpha=0.9$}
\end{figure}

\end{document}